\documentclass[12pt,a4paper]{article}
\usepackage[utf8]{inputenc}
\usepackage{geometry}
\usepackage{mathtools}
\usepackage{amssymb}
\usepackage{amsmath}
\usepackage{makecell}
\usepackage{amsthm}
\usepackage{multirow}
\usepackage{graphicx}
\usepackage{xcolor}
\usepackage{algorithm}
\usepackage{booktabs}
\usepackage{algpseudocode}
\usepackage{bbm}
\newcommand{\indep}{\perp\!\!\!\perp}
\renewcommand{\L}{\mathcal{L}}
\newtheorem{proposition}{Proposition}

\newtheorem{corollary}[proposition]{Corollary}
\title{Full Bayesian Reinforcement Learning via LF-IBIS}

\author{
  Stefano Masini\thanks{Department of Computer Science, University of Pisa, Italy. Email:\texttt{stefano.masini@phd.unipi.it}} \and
  Cecilia Viscardi\thanks{Department of Economics and Statistics, University of Salerno, Italy. Email: \texttt{cviscardi@unisa.it}} \and
  Michela Baccini\thanks{Department of Statistics, Computer Science, Applications ”G.Parenti”, University of Florence, Italy. Email: \texttt{michela.baccini@unifi.it}}
}

\date{} 

\begin{document}

\maketitle

\begin{abstract}
Reinforcement Learning (RL) is a sequential decision-making framework in which an agent learns optimal policies through interaction with an environment by maximizing cumulative rewards. Among RL methods, Bayesian Reinforcement Learning (BRL) addresses common practical challenges related to data scarcity by leveraging prior knowledge about the environment and sequential belief updates. However, most BRL approaches require an explicit likelihood function, which is frequently inaccessible or intractable in real-world settings.
We propose Likelihood-Free Iterated Batch Importance Sampling (LF-IBIS), a novel algorithm for BRL that updates the agent's beliefs online as new interactions become available. By combining Approximate Bayesian Computation with Iterated Batch Importance Sampling, LF-IBIS enables full Bayesian inference also in settings where the environment dynamics is not described by an explicit or tractable likelihood. The method yields approximate posterior distributions over both environment parameters and optimal policies,  providing a quantification of policy uncertainty useful for a Bayesian treatment of the exploration–exploitation trade-off.
We test the method on a simulation study in response-adaptive randomization in clinical trials, where  closed-form posteriors enable validation. 
Additional experiments address settings where the posterior has no closed form and illustrate online policy updating based on the posterior distribution of the optimal policy.

\end{abstract}

\vspace{1em}

\noindent\textbf{Keywords:} 
Likelihood-free inference, Importance Sampling, Bayesian Reinforcement Learning, Sequential Learning,  Response-Adaptive Randomization

\section{Introduction}
Reinforcement Learning (RL) is a computational framework for sequential decision-making in which an agent interacts with an environment to learn how to choose actions that maximize cumulative rewards over time \cite{sutton2018reinforcement}. At each step, the agent observes the current state, selects an action, and receives a reward or penalty along with a successor state, gradually improving its behaviour through trial and error. RL has become widely used in several domains—including robotics, healthcare and online decision systems—thanks to its ability to autonomously discover effective strategies in complex and uncertain settings.

RL methods are commonly classified into \emph{model-free} and \emph{model-based} approaches. \emph{Model-based} RL learns a model of the environment behaviour —often formalized as a Markov Decision Process (MDP)— that specifies the state-transition dynamics~\cite{Wong2022-qe}. The agent learns this model from interaction data and uses it to plan and compute its policies. In contrast, \emph{model-free} methods learn policies (or value functions) directly from observed state–action–reward sequences. In this work, we focus on the model-based formulation.

In classical RL, effective learning usually requires a large amount of data collected through many interactions with the environment \cite{sutton2018reinforcement}. In many real-world applications, however, acquiring such interaction data is expensive, limited, or risky. This exacerbates the well-known exploration–exploitation trade-off: the choice between exploring new actions and gather better information, or exploiting actions currently believed to be optimal to maximize immediate rewards \cite{sutton2018reinforcement,auer2002finite}. Bayesian Reinforcement Learning (BRL) addresses this challenge by integrating the principles of Bayesian inference with RL, enabling the incorporation of prior knowledge and a continual updating of the belief as new data become available. This leads to more data-efficient learning when interactions are limited~\cite{ghahramani2015probabilistic,dearden1999model}. 

A major obstacle of any approach grounded in a statistical formulation of the RL problem, including BRL, is the requirement for an explicit likelihood function associated with the model that describes the environment's behavior. In many real-world scenarios, such a likelihood is intractable or unavailable, for instance when the environment is modeled as a complex simulator or a black-box process~\cite{gutmann2016bayesian}.
To mitigate this limitation, we explore the use of Approximate Bayesian Computation (ABC), a class of likelihood-free inference methods that approximate the intractable likelihood through comparison between observed data and data simulated from the complex model~\cite{Sisson}. 

Although ABC has been successfully applied in various domains, its use in sequential decision-making and BRL is still limited. A few studies have explored the connection between ABC and RL. Ritto et al.~\cite{RittoBeregi} combine RL-based model selection with ABC parameter estimation in a digital-twin setting, using Thompson sampling \cite{thompson1933} (via a Beta-Bernoulli bandit) for model choice and ABC for updating the parameter posterior. Dimitrakakis and Tziortziotis~\cite{Dimitrakakis} introduce ABC into RL as a method to perform inference in environments where the transition model is unknown or the likelihood is intractable. In their framework, the agent maintains a prior distribution over environment models and updates this distribution by sampling from it, simulating trajectories, and retaining only those samples whose simulated behavior closely matches observed data, according to a predefined distance metric. This likelihood-free approach enables the construction of approximate posterior distributions over environment models, which in turn supports principled exploration through model uncertainty. Their work represents a pioneering application of ABC to RL, demonstrating its potential to handle black-box environments without requiring access to differentiable or analytically defined transition dynamics.

Our work extends this line of research in two key directions:

\textbf{(i) Online inference with progressive histories:} unlike the offline-oriented setting of~\cite{Dimitrakakis}, our approach is designed for online RL and incrementally updates posterior beliefs as new interactions with the environment occur.

\textbf{(ii) Joint posterior distributions over models and policies:} while the original ABC-RL approach focuses on computing posterior distributions only over environment models, we propagate uncertainty over candidate stochastic policies, allowing the agent to exploit the whole posterior belief. We propose adapting policies by sampling from their posterior distributions, leading to a fully Bayesian analogue of Thompson sampling, in which exploration is driven by posterior uncertainty and exploitation naturally emerges as this uncertainty decreases.

To this end, we introduce a framework for full Bayesian Reinforcement Learning (fBRL) based on a novel algorithm, \emph{Likelihood-Free Iterated Batch Importance Sampling} (LF-IBIS), which combines Sequential Monte Carlo ABC (SMC-ABC) principles \cite{Sisson} with Iterated Batch Importance Sampling \cite{Chopin} in an RL setting. A similar approach is proposed by Roy et al.~\cite{roy2025}, who introduce a generalized Bayesian framework for deep reinforcement learning. Our approach shares with Roy et al.'s framework the fully Bayesian perspective, both the methods rely on SMC for posterior approximation, and exploit multiple posterior samples, extending classical Thompson sampling. 
However, their method is based on generalized Bayesian updating via prequential scoring rules, which act as a surrogate likelihood, whereas our approach adopts a fully likelihood-free framework based on ABC and direct simulation. Moreover, their framework leverages deep generative models and expected Thompson sampling, while our method exploits full posterior information and operates in a fully online, sequential setting as new data arrive.

We complement our approach with an analysis of the asymptotic properties and validate it empirically in a Response-Adaptive Randomization problem for clinical trials~\cite{wilson2025rar}, where a conjugate Bayesian solution allows direct comparison with LF-IBIS, standard rejection ABC, and the exact posterior.

\paragraph{Structure of the paper}Section~\ref{sec:bayesian_rl} introduces RL, its statistical formulation and the extension to the full Bayesian setting. Section~\ref{sec:lf_ibis} presents the likelihood-free formulation and details our LF-IBIS algorithm, highlighting how adaptive ABC thresholding mechanism of SMC-ABC \cite{DelMoral} is integrated with online inference in an IBIS paradigm\cite{Chopin}.  
Theoretical insights are provided in Section~\ref{sec:theory}.
In Section~\ref{sec:experiments}, we test the method empirically in the context of Response-Adaptive Randomization for clinical trials. 
Section~\ref{sec:conclusion} concludes the paper with a summary and discussion. 
Supplementary material contains proofs and additional  details about algorithms and experimental settings.

\section{Full Bayesian Reinforcement Learning} \label{sec:bayesian_rl}

In this work, we aim to define a procedure for providing a full Bayesian inference methodology in RL.
To this end, we first cast RL into a statistical framework and define a likelihood function associated with the model assumed for the environment’s behavior. Then, we  exploit the statistical formulation to provide Bayesian inference for the quantities of interest. Here, the term \emph{full Bayesian} refers to an approach in which posterior uncertainty is not only represented at the level of the environment model, value functions, policy parameters, or rewards, but is propagated to the optimal policy. In contrast with Bayesian RL methods that mainly exploit posterior expected quantities, posterior summaries, or single posterior samples, our objective is to approximate the posterior distribution over the optimal policy itself.

In what follows, capital letters denote random variables, while lowercase letters their realizations. Calligraphic letters denote sets and  variables' domains. For a finite set $\mathcal{Y}$ $|\mathcal{Y}|$ denotes the cardinality and $\Delta(\mathcal{Y})$ denotes the probability simplex over $\mathcal{Y}$. $\mathbb{N}$, $\mathbb{R}$ and $\mathbb{R}^{+}$ denote the sets of natural, real and non-negative real numbers, respectively.
We use $\Pr(\cdot)$ to refer to probabilities of events, $p(\cdot)$ to generically denote probability density or mass functions,  $P$ to denote stochastic matrices. 

\subsection{Statistical Reinforcement Learning} \label{subsec:stat_rl}
In RL, an \emph{agent} learns a strategy to make optimal decisions while interacting with an \emph{environment} and receiving rewards based on its actions. The interaction unfolds as a sequence of action--state--reward triples, the so-called observed history $x=\{(a_n,s_n,r_n)\}_{n=1}^N$,  with $N$ equal to the number of agent-environment interactions.\footnote{The history can alternatively be indexed by the time at which interactions occur. Here, we assume a single interaction per time step.}  From a statistical point of view,  action \( a_n \) and state \( s_n \) can be seen as realizations of two random variables, \( A_n \in \mathcal{A}_n \) and \( S_n \in \mathcal{S}_n \), while \( r_n \), shorthand for \( r(a_n, s_n) \), is the reward resulting from a (either deterministic or stochastic) function
\[
    r_n: \mathcal{A}_n \times \mathcal{S}_n \to \mathcal{R} \subseteq \mathbb{R}.
\]

Throughout, the state space \(\mathcal{S}_n\), the action space \(\mathcal{A}_n\), and the reward space \(\mathcal{R}\) are assumed to be finite sets. 

A common assumption is that the dynamic of the system can be described by a homogeneous Markov Decision Process (MDP) \cite{Bellman1957, Wong2022-qe}, meaning that the following two assumptions hold:

\begin{itemize}
 \item[ASS.1] $S_{n+1} \indep \{a_j, s_j\}_{j=1}^{n-1} \mid (a_n,s_n)$ for each $n\in \{1,...N-1\}$. 
 
 This establishes that the sequence of states forms a first order Markov chain and that the environment's behavior is fully described through $|\mathcal{A}_n|$ transition matrices denoted by $P_n(a)$, with   entries  $\Pr(s'\mid s,a):=\Pr(S_{n+1}=s'\mid S_n=s, A_n=a)$ for $(s,s')\in \mathcal{S}_n^2$ and $a\in \mathcal{A}_n $.
  \item[ASS.2]   $\mathcal{S}_n\equiv \mathcal{S}, \mathcal{A}_n\equiv \mathcal{A}, P_n(a) \equiv P(a)$ for each $n\in\{1,...,N\}$ and $a\in \mathcal{A}$, where $\mathcal{S}$ and $\mathcal{A}$ are two finite sets,  each $P(a)$ is a $(k_S\times k_S)$-stochastic matrix, with  $k_S:=|\mathcal{S}|$.

  This establishes the homogeneity of the Markov process.
   
\end{itemize}
Note that the observed history is a realization of a multivariate discrete random variable \(X\) that takes values in  \(\mathcal{X}^N\), where \(\mathcal{X} := \mathcal{A} \times \mathcal{S} \times \mathcal{R}\). Its joint probability distribution depends on 1) $k_A:=|\mathcal{A}|$ stochastic matrices, $P(a)$, with $a\in\mathcal{A}$; 2) a stochastic policy $\pi:\mathcal{S}\to\Delta(\mathcal{A})$ 
where $\pi(a\mid s)$ is the probability of choosing action $a$ from state $s$. \footnote{A stochastic policy is a mapping $\pi: \mathcal{S} \to \Delta(\mathcal{A})$ , whereas a deterministic policy is a mapping $\pi: \mathcal{S} \to \mathcal{A}$. We restrict attention to stochastic policies.}

Given a policy $\pi(\cdot\mid \cdot)$, the \emph{value function} $V_\pi(s)$ quantifies the expected future reward that can be gained from state $s$. Assuming the transition matrices $P(a)$ as known, it can be computed recursively by the Bellman equation \cite{Bellman1957}
\begin{equation}
    \label{eq:Bellman_with_H}
V_{\pi}(s) = \sum_{a\in\mathcal{A}}\pi(a\mid s) Q_\pi(s,a) + \lambda \mathcal{H}(\pi(\cdot\mid s)) \qquad  \forall s\in\mathcal{S} 
\end{equation}
where $\mathcal{H}(\pi(\cdot\mid s))$ is the Shannon entropy of the stochastic policy 
\[
\mathcal{H}(\pi(\cdot\mid s)):=-\sum_{a\in \mathcal{A}} \pi(a\mid s)\log\pi(a\mid s)
\]
and
$Q_\pi(s,a)$ denotes state–action value function 
\[
Q_\pi(s,a):=\sum_{s'\in \mathcal{S}} \Pr(s'\mid s,a)[r(a,s')+\gamma V_\pi(s')].
\]

Here $\gamma \in [0,1]$ is the \emph{discount factor}, which determines how future rewards are weighted relative to immediate ones.  
Note that the Bellman equation in~\eqref{eq:Bellman_with_H} is not the classical RL formulation; rather, it derives from the Maximum Entropy (MaxEnt) RL framework~\cite{ziebart2008maxent}. 
Unlike standard RL---where optimizing the expected total reward often leads to deterministic policies---the MaxEnt objective includes an entropy term that explicitly promotes stochasticity. 
The parameter $\lambda\in\mathbb{R}^+$ controls the relative weight of the entropy contribution, thereby regulating the degree of randomness in the resulting policy. Larger values of $\lambda$ promote higher-entropy policies and thus greater randomness. As noted in \cite{Haarnoja2018}, the temperature parameter (typically denoted by $\alpha$ in the reinforcement learning literature) can be absorbed into the reward function through a simple rescaling by $\lambda^{-1}$. Accordingly, it does not affect the structure of the optimal policy, and we omit it in the following by fixing $\lambda = 1$ without loss of generality.

An \emph{optimal policy} $\pi^*(a \mid s)$ is any policy that allows the agent to reach the optimal value function $V^*(s) = \max_{\pi} V_\pi(s)$ for every state. Under the MaxEnt objective, several classical RL algorithms—such as Policy Iteration, Value Iteration, and Q-Iteration—can be used to compute $\pi^*$~\cite{sutton2018reinforcement}. All these methods rely on access to the transition model $P$, or on accurate estimates of either its entries or the corresponding value functions. In many cases, empirical estimates of transition probabilities can be obtained through Monte Carlo sampling~\cite{sutton2018reinforcement}.

In what follows, we cast RL within a statistical inference framework that enables learning of both the transition dynamics and the optimal policy.

\paragraph{Likelihood function in RL}
Recall that under the MDP assumptions, the dynamic of the environment is fully described by \( k_A \) matrices denoted by \( P(a) \) with \( a \in \mathcal{A} \). These are  $(k_S\times k_S)$-stochastic matrices, where each row corresponds to a state \( s \) and contains the conditional probability distribution over \(\mathcal{S}\), denoted by \(\Pr(\cdot \mid s, a)\).

Let us denote by $\mu \in \mathcal{M}$, with $\mathcal{M} \subseteq \mathbb{R}^d$ a vector of parameters of size $d$  for some $d \in \mathbb{N}$. We assume that the rows of the matrix $P(a)$ are unknown elements of a family of  conditional probability distributions parametrized by $\mu$: \[\mathcal{F}:=\{\Pr(\cdot \mid s, a, \mu ): \mu \in \mathcal{M}\}.\] 
Under this assumption, given a policy $\pi$, the random history   $X$ has a distribution represented by the following probability function:

\begin{equation}
\label{eq:jointprob}
    \Pr(X=x\mid \mu, \pi) =  \rho(s_1) \prod\limits_{n=1}^N \pi(a_n\mid s_n)\Pr(S_{n+1}=s_{n+1}\mid s_{n}, a_{n}, \mu) \qquad x\in \mathcal{X}^N,
\end{equation}
where $\rho(\cdot)$ is the initial state distribution.
This probability has key relevance when addressing inferential purposes since it is proportional to the likelihood function for the parameters $\mu$ that should be inferred to learn the environment transition matrix. 
In particular, let us define 
\begin{equation*}
\label{}
   f(x \mid \mu) \coloneqq \prod\limits_{n=1}^N \Pr(S_{n+1}=s_{n+1}\mid s_{n}, a_{n}, \mu).
\end{equation*}
Given the initial state $s_1$ and a policy $\pi(\cdot \mid \cdot)$, from Eq \eqref{eq:jointprob} follows that

\begin{equation}
\label{eq:likelihood}
    \L(\mu; x, \pi, s_1)\propto  f(x \mid \mu)
\end{equation}
where $\L(\mu; x, \pi, s_1)$ denotes the likelihood function, which will henceforth be written as $\L(\mu; x)$ for brevity.

This statistical formulation lays the foundation for inferential procedures both in a frequentist and Bayesian framework where  an agent wants to learn the behavior of an environment governed by $\mu$, based on the sequence of its states $s_1, \ldots, s_N$ in an observed history $x$. In this work, we focus on a Bayesian treatment of the RL problem, which is presented in the following section.

\subsection{Bayesian Reinforcement Learning}

Bayesian approaches to RL may represent posterior uncertainty over different quantities, including environment models, value functions, policy parameters, policy gradients, or reward functions. In the framework proposed in this work, inference is performed over the parameters $\mu$ governing the environment dynamics, in line with model-based BRL approaches such as BAMDP-based formulations \cite{duff_2002} and posterior sampling methods \cite{osband_etal2013, strens2000}. In this context, the parameters $\mu$ are treated as random variables. The agent starts with a prior belief expressed by the prior distribution $p(\mu)$ over $\mathcal{M}$. Then, it uses the likelihood function in equation \eqref{eq:likelihood} to update it and compute the posterior $p(\mu\mid x)$ through the Bayes formula:

\begin{equation}
    \label{eq:posterior_update}
    p(\mu\mid x) \propto p(\mu )\L(\mu; x).
\end{equation} 

In this work, we adopt a \emph{full Bayesian} perspective. Unlike the model-based BRL approaches the posterior is not used solely to identify a specific parameter vector—e.g., the maximum a posteriori estimate or posterior expected quantities—to evaluate the value function and determine an optimal policy, but  all the information contained in the posterior distribution in Eq \eqref{eq:posterior_update} is exploited.
In fact, the optimal policy in turn depends on the vector of parameters $\mu$ through the transition probabilities. We make this dependence explicit in  the value functions and the optimal policy.
The optimal state--action value function $Q^*_\mu(s,a)$ for a given  \(\mu\) is 
\[
\label{eq:Qstar_mu}
Q^*_\mu(s,a)
= \sum_{s'\in \mathcal{S}} \Pr(s'\mid s,a,\mu)\big[ r(a,s') + \gamma\, V^*_\mu(s') \big],
\]
where \(V^*_\mu(s)\) is the optimal (entropy-regularized) value function retrieved by solving the soft Bellman fixed-point
\[
\label{eq:Vstar_mu}
V^*_\mu(s)
= \lambda \log\!\bigg( \sum_{a \in \mathcal{A}}\exp\!\bigg(\frac{1}{\lambda} Q^*_\mu(s,a)\bigg) \bigg).
\]
 This value function corresponds to an optimal policy
\begin{equation}
\label{eq:pi_star_mu}
\pi^*_\mu(a\mid s)
= \frac{\exp\!\big( \tfrac{1}{\lambda} Q^*_\mu(s,a)\big)}
       {\sum\limits_{a'\in \mathcal{A}} \exp\!\big( \tfrac{1}{\lambda} Q^*_\mu(s,a')\big)}.
\end{equation}

The soft Bellman fixed--point equation and the corresponding Boltzmann optimal policy are standard results in maximum--entropy reinforcement learning; an explicit derivation can be found in \cite{Haarnoja2017}, building on the maximum-entropy formulation of \cite{Ziebart2010}.

We exploit their results to provide a posterior distribution for the optimal policy, rather than a point estimate. In fact, if we denote by $\pi^*_\mu$ the stochastic matrix of probabilities $\{\pi^*_\mu(a \mid s) : (a,s) \in \mathcal{A} \times \mathcal{S}\}$, then it follows from equation~\eqref{eq:pi_star_mu} that $\pi^*_\mu$ is a function of $\mu$.
To compute $\pi^*_\mu$ given a specific $\mu$ value, a standard optimal-policy computation method can be employed; in our case, we adopt Policy Iteration. Since each $\mu$ value is associated with a posterior density (or probability), the uncertainty about $\mu$ can be propagated to $\pi^*_\mu$, thus computing $p(\pi^* \mid x)$. 

Exploiting the posterior distribution of the optimal policy requires a simulation method that allows us to repeatedly sample from it. To this end, Section~\ref{sec:lf_ibis} introduces the proposed algorithm for approximating the posterior distribution of $\pi^\star$. Once this distribution has been approximated, it can be used to address the exploration--exploitation trade-off in an uncertainty-aware way. A possible decision strategy based on this posterior information is then presented in Section~\ref{sec:updating_policy}.



\subsubsection{Posterior-Based Policy Updating}
\label{subsec:posterior_policy_update}

The posterior distribution over policies provides valuable information for deciding when it is advantageous to update the current policy. 

To this end, we adopt a sampling-based decision rule grounded in the Bayesian Mean Squared Error (BMSE). Let $\{\pi^{(\ell)}\}_{\ell=1}^L$ denote a collection of policies sampled from the posterior distribution of the optimal policy. The BMSE with respect to a given policy $\pi$ is defined as follows:

\begin{equation}
\label{eq:BMSE}
BMSE(\pi) = \frac{1}{L} \sum_{\ell=1}^{L} | \pi^{(\ell)} - \pi |^2.
\end{equation}

In the implementation of an RL procedure, this quantity can be used to assess how well a given policy aligns with the posterior distribution of optimal policies. In particular, at each update step, we compute $BMSE(\pi)$ for the current policy and $BMSE(\tilde{\pi})$ for a new candidate policy $\tilde{\pi}$ drawn from the posterior distribution. The policy with the lower BMSE is then selected.

If the selected policy coincides with the current policy, we interpret this as evidence that the posterior mass is already concentrated around it, and thus further exploration of the environment may be beneficial. Conversely, if the sampled policy yields a lower BMSE, we update the current policy accordingly, exploiting the information contained in the posterior.

This procedure naturally balances exploration and exploitation: the posterior distribution encodes uncertainty over optimal policies, while the BMSE-based criterion provides a principled mechanism to decide whether to retain or update the current policy.

\section{Likelihood-free Iterated Batch Importance Sampling (LF-IBIS)}\label{sec:lf_ibis}
The posterior distribution in Eq.~\eqref{eq:posterior_update} is often not available in closed form. As a consequence, the posterior distribution over the optimal policy is not available analytically either. 
As in Bayesian analysis in general, posterior computations often require methods for getting samples from the posterior distribution such as Monte Carlo or Markov Chain Monte Carlo algorithms \cite{robert2004monte}. 
In the presented RL framework, two issues make this task particularly challenging: 

\begin{enumerate}
    \item \textbf{Intractable or unavailable likelihood.} The likelihood $\L(\mu ; x)$ depends on the transition matrices parameterized by $\mu$, whose entries may be complex or unknown functions of the parameters. In other words, the family $\mathcal{F}$ may be unknown, or it may consist of non-computable elements. This occurs because environment dynamics are often described by computational models that can be simulated but lack analytical form. 
    \item \textbf{Sequential data.} The history $x$ is observed gradually over time, and recomputing posterior distributions as new interactions agent-environment occur can be computationally prohibitive. 
\end{enumerate}

Likelihood--based  Monte Carlo methods are therefore not directly applicable. 
To address these issues, we propose a hybrid algorithm, \emph{Likelihood-free Iterated Batch Importance Sampling}, which combines Approximate Bayesian Computation (ABC) with Iterated Batch Importance Sampling (IBIS). 

\subsection{ABC in Reinforcement Learning \label{sec:ABC}}

Approximate Bayesian Computation (ABC) provides a likelihood-free approach to Bayesian inference when likelihood evaluation is infeasible or analytically intractable \cite{rubin1984bayesian,sunnaker2013abc}. Instead of computing the exact likelihood, ABC relies on a Monte Carlo approximation based on the simulation of pseudo-data, i.e. simulated histories,  from the generative model associated with the environment -- see \cite{Sisson} for details about ABC and \cite{Dimitrakakis} for a first application to RL. 
The simplest ABC sampling scheme \cite{tavare1997, pritchard1999}, i.e., the standard Rejection ABC, proceeds by sampling parameters from the prior, using these parameters to generate pseudo-data from a simulator that reproduces the generative process underlying the intractable likelihood, computing distances between simulated and observed data, and accepting parameters for which the distance is within a tolerance threshold $\epsilon>0$. The output is a sample from an approximate posterior distribution, with the quality of approximation depending on the choice of $\epsilon$.

 Let $y \in \mathcal{X}^N$ denote a simulated history, sampled from a simulator that reproduces the environment behaviour. The output of  the ABC procedure is a sample from the following approximate joint posterior distribution:

\begin{center}
    $p_\epsilon(\mu, y\mid x) = \dfrac{p(\mu) \L(\mu; y) \mathbb{I}_{A_{\epsilon,x}} (y)}{\int\limits_{\mathcal{M}}\sum\limits_{y \in A_{\epsilon,x}} p(\mu)\L(\mu ; y)   d\mu}\propto p(\mu) f(y \mid \mu) \mathbb{I}_{A_{\epsilon,x}} (y) $   .
\end{center}

Here,  $\mathbb{I}_Z(\cdot)$ is the indicator function of a set $Z$, the set $A_{\epsilon, x} = \{y \in \mathcal{X}^N : d(y, x) \leq \epsilon\}$ identifies the acceptance region,
and $d:\mathcal{X}^N \times \mathcal{X}^N \rightarrow \mathbb{R}^+$ is a distance function.

By marginalizing out $y$, we obtain the approximate posterior over the parameters $\mu$:

\begin{equation*}
    p_\epsilon(\mu\mid x) \propto p(\mu) f_\epsilon(x \mid \mu)
\end{equation*}
where:

\begin{equation}
\label{eq:approxlikelihood}
  f_\epsilon(x \mid \mu) \coloneqq \Pr\big(d(y,x)\leq \epsilon \mid \mu\big) = \sum\limits_{y\in\mathcal{X}^N} f(y \mid \mu) 
{\mathbb{I}_{A_{\epsilon,x}}(y)}.   
\end{equation}

This quantity represents the \emph{ABC likelihood}, that
can be estimated via Monte Carlo sampling. Specifically, one can draw $M$ i.i.d. pseudo-dataset $\{y_1, \dots, y_M\} \sim f(\cdot \mid \mu)$ and approximate the probability in  Eq \eqref{eq:approxlikelihood} by 
\begin{equation}
\label{eq:lMClikapprox}
f_{\epsilon,M}(x\mid \mu)\coloneqq \frac{1}{M}\sum\limits_{m=1}^M\mathbb{I}_{A_{\epsilon,x}}(y_m).
\end{equation}
Note that the probability of  drawing pseudo-data close to the observed data, $\Pr\big(d(x,y)\leq \epsilon \mid \mu\big)$, depends also on $N$. Thus, to make the algorithm more scalable and get a  good Monte Carlo approximation without requiring huge $M$, a common practice is to summarize both observed and simulated data through lower-dimensional summary statistics $\eta(\cdot)$. This implies a new definition of the acceptance region as $A_{\epsilon,\eta(x)}=\{y \in \mathcal{X}^N: d(\eta(y), \eta(x) )\leq \epsilon\}$, and the introduction of a new source of approximation. Moreover, the indicator function is often replaced by a smooth kernel function $\kappa(\cdot)$, which allows for a continuous scaling from 1 (when $x = y$) to 0 (when $d(\eta(y), \eta(x)) > \epsilon$), leading to the following posterior approximation:


\begin{equation}
\label{eq:approxposterior}
    {p}_\epsilon(\mu\mid x) \propto p(\mu)f_\epsilon(x\mid \mu) \approx p(\mu) \cdot \frac{1}{M}\sum\limits_{m=1}^M \kappa \big(d(y_m,x); \epsilon\big)
\end{equation}
Henceforth, to simplify notation, $x$ and $y$ will denote, according to the context, either the complete data or their corresponding summary statistics $\eta(x)$ and $\eta(y)$.

In Eq \eqref{eq:approxposterior} the ABC kernel $\kappa$ is typically a smoothing function (e.g., Epanechnikov or Gaussian) that uses $\epsilon$ as scale parameter and is often defined over a compact support.
Note that the standard Rejection ABC algorithm uses $M=1$ and relies on a crude Monte Carlo estimate of the ABC likelihood which, for the sake of this work, is instead a key quantity, as will become clear in the remainder of the paper.
In fact, more advanced alternative to Rejection ABC, such as the \emph{Sequential Monte Carlo Approximate Bayesian Computation} (SMC-ABC) algorithm proposed by Del Moral et al. \cite{DelMoral}, uses the Monte Carlo approximation in Eq \eqref{eq:lMClikapprox}. 
This method adapts classical SMC methodology to the ABC framework by constructing a sequence of intermediate approximate distributions \(\{p_{\epsilon_i}(\mu \mid x)\}_{i=1}^n\), each associated with one of the decreasing tolerance levels \(\epsilon_1 > \dots > \epsilon_n\). 
The algorithm iterates importance sampling steps as displayed in Alg \ref{alg:smcabc}.
At each iteration \(i\), the algorithm gets a weighted sample of size $L$, $\{ \mu_i ^{(\ell)}, \omega_i^{(\ell)}(\epsilon_i)\}_{\ell=1}^L $,  through three main steps: (i) reweighting particles from the previous iteration -- i.e. the weighted sample from ${p}_{\epsilon_{i-1}} $ -- according to the ABC likelihood that uses the current threshold \(\epsilon_i\), (ii) resampling $L$ particles using new weights, (iii) perturbing particles through a  transition kernel -- e.g., MCMC kernel of invariant distribution ${p}_{\epsilon_i}$ that uses a proposal distribution $q(\cdot,\cdot)$.

In the version proposed by Del Moral et al.~\cite{DelMoral}, the sequence \(\{\epsilon_i\}_{i=1}^n\) is not fixed in advance but instead chosen \emph{adaptively} to gradually improving the quality of the approximation while controlling the accuracy of the posterior estimate. The key idea is starting with a large tolerance \(\epsilon_1\), which allows more simulated dataset to be accepted, and then progressively reduce it to focus the inference on regions of higher posterior density.

To adapt the threshold, the algorithm monitors the \emph{Effective Sample Size} (ESS): the next threshold \(\epsilon^*\) is chosen by solving the following equation through a bisection algorithm:
\begin{equation}
\label{eq:ess}
\text{ESS}\left( \omega_i(\epsilon^*) \right) = \alpha \cdot \text{ESS} \left( \omega_{i-1}(\epsilon_{i-1}) \right),
\end{equation}
where \(\omega_i(\cdot)\) denote the corresponding (unnormalized) importance weights, and \(\alpha \in (0,1)\) is a tuning parameter (e.g., $\alpha = 0.9$) that controls how aggressively the tolerance is reduced.



\begin{algorithm}[ht]
\caption{SMC-ABC Algorithm (Del Moral et al. 2012)} \label{alg:smcabc}
{\scriptsize
\begin{algorithmic}[1]
    \State \textbf{Input:} target threshold $\epsilon$, starting threshold $\epsilon_1$, $\alpha\in(0,1)$, nr of simulations $L$, number of pseudo-data $M$, observed data $x$; proposal distribution $q(\cdot, \cdot)$
        \State Draw $\mu_1^{(\ell)} \sim p(\cdot)$ for $\ell\in \{1,...,L\}$
            \State  For each $\ell$, simulate $y_{1,m}^{(\ell)} \sim f(\cdot\mid\mu_1^{(\ell)})$ for $m\in \{1,...,M\}$
        \State Compute initial weights $\omega_1(\epsilon_1)=\{\omega_1^{(\ell)}(\epsilon_1)\}_{\ell=1}^L$ \hspace{1.3cm}where $\omega_1^{(\ell)}(\epsilon_1)=\dfrac{\sum_{m=1}^M \kappa\big(d(y_{1,m}^{(\ell)},x); \epsilon_1\big)}{M}$
\State $i=1$
    \While{$\epsilon_i>\epsilon$}
       \State Set $i=i+1$
       \State \textbf{Reweighting:}: Compute \textbf{new $\epsilon_i$} by solving with bisection method: \hspace{0.5cm}$
       \text{ESS}\big(\omega_i(\epsilon_i)\big) = \alpha \text{ESS}\big(\omega_{i-1}(\epsilon_{i-1})\big)
       $
       \State Where \hspace{0.5cm}
       $
       \omega_i^{(\ell)}(\epsilon_i) = \omega^{(\ell)}_{i-1}(\epsilon_{i-1}) \dfrac{\sum_{m=1}^M \kappa\big(d(y_{i-1,m}^{(\ell)}, x); \epsilon_i\big)} {\sum_{m=1}^M \kappa\big( d(y_{i-1,m}^{(\ell)},x); \epsilon_{i-1}\big)}
       $
       \State \textbf{Resampling:} Sample $L$ pairs from $\{ (\mu_{i-1}^{(\ell)}, y_{i-1,m}^{(\ell)})\}_{\ell=1}^L$ using their weights $\dfrac{\omega_i^{(\ell)}(\epsilon_i)}{\sum_{\ell=1}^L\omega_i^{(\ell)}(\epsilon_i)}$ as probabilities. Denote new sampled pairs, with a slight abuse of notation, by $\{(\mu_{i-1}^{(\ell)}, y_{i-1,m}^{(\ell)})\}_{\ell=1}^L$.
           \State Set new weights: \hspace{1cm}
           $
           \omega^{(\ell)}_i(\epsilon_i)=1
           $

\State \textbf{Moving:}
Draw $\mu^* \sim q(\mu_{i-1}^{(\ell)}, \cdot)$ for $\ell \in\{1,...,L\}$
                           \State For each $\ell$, simulate $y_{m}^* \sim f( \cdot \mid \mu^*)$ for $m\in\{1,...,M\}$

            \State Compute  the acceptance ratio: \hspace{0.2cm}
            $
            \text{ar}(\mu_i^{(\ell)}\rightarrow \mu^*) = \dfrac{p(\mu^*) \sum_{m=1}^M \kappa\big(d(y_{m}^*,x); \epsilon_i\big)q(\mu^*, \mu_{i}^{(\ell)})}{p(\mu_i^{(\ell)})\sum_{m=1}^M \kappa\big( d(y_{i-1,m}^{(\ell)},x); \epsilon_i\big)q(\mu_{i}^{(\ell)}, \mu^*)}
            $ for $\ell \in\{1,...,L\}$
                        \State Draw $u^{(\ell)} \sim \text{Unif}(0,1)$ for $\ell \in \{1,...,L\}$
            \If{$u^{(\ell)} < \min\{1,\text{ar}(\mu_i^{(\ell)}\rightarrow \mu^*)\}$}
                \State Set $\mu_i^{(\ell)}= \mu^*$  and $y^{(\ell)}_{i,m}=y^*_{m}
                $ for $\ell \in \{1,...,L\}$
                \Else 
                \State Set $\mu_i^{(\ell)}= \mu_{i-1}^{(\ell)}$  and $y^{(\ell)}_{i,m}=y_{i-1,m}^{(\ell)}$ for $\ell \in \{1,...,L\}$
            \EndIf
      \EndWhile
\end{algorithmic}}
\end{algorithm}


Note that in \cite{DelMoral}, the ABC kernel is specified as an indicator function, whereas Algorithm~\ref{alg:smcabc} is formulated more generally to accommodate arbitrary ABC kernels. Using multiple simulations (\( M \geq 1 \)) per particle and/or alternative kernel choices may improve the stability of the algorithm by reducing variability among the importance weights. Further details on this aspect are provided in Appendix~\ref{appendix:abc_details}.

More generally, the kernel function \( \kappa \), the summary statistics \( \eta(\cdot) \), the distance function, the metrics used to evaluate the efficiency of posterior estimates (e.g., ESS) etc, can be replaced by more sophisticated alternatives to better capture the relevant features of the data and improve inference quality. These aspects will be discussed specifically with reference to the case at hand in the following sections of the paper.

\subsection{Iterated Batch Importance Sampling (IBIS) \label{sec:ibis}}
The Iterated Batch Importance Sampling (IBIS) algorithm, introduced by Chopin \cite{Chopin}, is a Sequential Monte Carlo (SMC) method designed for static scenarios, where a single posterior distribution
is targeted.

The central idea is to build a sequential data setting to address high-dimensional problems in which standard Monte Carlo methods are impractical, due to the computational burden associated with data size and/or the lack of well-informed proposal distributions.
The algorithm iterates importance sampling steps as displayed in Alg \ref{alg:ibis}.
At iteration $i$, it draws a weighted sample 
$\{ \mu_i^{(\ell)}, \omega_i^{(\ell)} \}_{\ell=1}^L$ 
from the partial posterior 
$p(\mu \mid x_{1:i})$, 
conditioned on observations 
$x_{1:i} = \{ x_j \}_{j=1}^i$. 
The algorithm proceeds through 
a reweighting, a resampling, and a moving step. 
During the reweighting phase, the weighted sample from the previous iteration, i.e., 
the particles from $p(\mu \mid x_{1:i-1})$, is adapted to account for the new observation $x_i$, 
yielding an updated approximation of $p(\mu \mid x_{1:i})$.\footnote{For simplicity, we assume that only one observation is added per iteration, although the algorithm is generally formulated to handle batches larger than one.} 
Starting from the prior, the algorithm iteratively refines the posterior by using the approximate distribution from the previous iteration 
as the proposal for the next iteration. Therefore, at the $i$-th iteration, the unnormalized importance weight for the $\ell$-th particle is
\begin{equation}
\label{eq:ibisreweight}
   \omega^{(\ell)}_{i} = \omega^{(\ell)}_{i-1} \dfrac{p(\mu^{(\ell)}_{i-1}\mid x_{1:i})}{p(\mu^{(\ell)}_{i-1}\mid x_{1:i-1})}= \omega^{(\ell)}_{i-1}\dfrac{p(\mu^{(\ell)}_{i-1})f(x_{1:i}\mid \mu^{(\ell)}_{i-1})}{p(\mu^{(\ell)}_{i-1})f(x_{1:i-1}\mid \mu^{(\ell)}_{i-1})} 
   =\omega^{(\ell)}_{i-1} \cdot f(x_{i} \mid x_{1:i-1}, \mu_{i-1}^{(\ell)})
\end{equation}
where $f(x_{i} \mid x_{1:i-1}, \mu_{i-1}^{(\ell)})$ is the predictive likelihood and serves as reweighting factor  to account for the new data point $x_{i}$.

\begin{algorithm}[H]
\caption{IBIS Algorithm (Chopin, 2002)} \label{alg:ibis}
{\scriptsize
\begin{algorithmic}[1] 
\State{\textbf{Input}} Prior \( p(\mu) \), number of particles \( L \), observed data $x$, proposal distribution $q(\cdot, \cdot)$
\State Draw $\mu_1^{(\ell)} \sim p(\mu)$ for $\ell \in \{1,...,L\}$
\State Compute initial weights $\omega_1=\{\omega_1^{(\ell)}\}_{\ell=1}^L$  \hspace{1.3cm}where $\omega_1^{(\ell)}=1$  for $\ell \in \{1,...,L\}$
\For{each time step \( i = 2,\dots,n \)}
    \State \textbf{Reweighting:}
    $\omega_{i}^{(\ell)} = \omega_{i-1}^{(\ell)} \cdot f(x_i \mid x_{1:i-1}, \mu_{i-1}^{(\ell)})$ \hspace{1.3cm} for $\ell \in \{1,...,L\}$
    \State \textbf{Resampling:} Sample \( L \) particles $\{\mu_{i-1}^{(\ell)}, \omega_i^{(\ell)}\}_{\ell=1}^L$ using weights $\dfrac{\omega_i^{(\ell)}}{\sum_{\ell=1}^L\omega_i^{(\ell)}}$ as probabilities

            \State Set new weights: \hspace{0.5cm}
            $
            \omega^{(\ell)}_i=1
            $
            
    \State \textbf{Moving:} Draw $\mu^* \sim q(\mu_{i-1}^{(\ell)}, \cdot)$ for $\ell \in \{1,...,L\}$ 


    \State Compute  the acceptance ratio: \hspace{0.2cm}
            $
            \text{ar}(\mu_{i-1}^{(\ell)}\rightarrow \mu^*) = \dfrac{p(\mu^*) q(\mu^*, \mu_{i-1}^{(\ell)})}{p(\mu_{i-1}^{(\ell)})q(\mu_{i-1}^{(\ell)}, \mu^*)}
            $ for $\ell \in\{1,...,L\}$
    \State Draw $u^{(\ell)} \sim \text{Unif}(0,1)$ for $\ell \in\{1,...,L\}$
            \If{$u^{(\ell)} < \min\{1,\text{ar}(\mu_{i-1}^{(\ell)}\rightarrow \mu^*)\}$}
                \State Set $\mu_i^{(\ell)}= \mu^*$ for $\ell \in\{1,...,L\}$
            \Else 
                \State Set $\mu_i^{(\ell)}= \mu_{i-1}^{(\ell)}$ for $\ell \in\{1,...,L\}$
            \EndIf
\EndFor
\end{algorithmic}}
\end{algorithm}

This sampling approach is particularly well adapted to situations in which data become gradually available rather than being accessible from the beginning, like in RL.
However, in our framework, pointwise evaluations of the likelihood function are infeasible; therefore, it cannot be directly applied. The next section presents a new algorithm that extends its applicability to reinforcement learning. 

\subsection{LF-IBIS}
In RL, at each time step, the agent observes the outcome of its action and uses this information to improve its understanding of the environment. This means that we need an algorithm that updates the posterior for $\mu$, adapting a sample from $p(\mu \mid x_{1:i})$ to a sample from $p(\mu \mid x_{1:i+1})$, as new data $x_{i+1}$ become available.
To this end, our proposal is an algorithm that alternates two key phases:
\begin{itemize}
    \item \textbf{IBIS steps}, where the observed history is extended by including the new interaction;
    \item \textbf{SMC-ABC steps}, where the discrepancy threshold $\epsilon_i$ is adaptively reduced.
\end{itemize}
 The proposed algorithm is presented in Alg.~\ref{alg:lfibis}, and its main computational flow is summarized in the flowchart shown in Fig.~\ref{fig:lfibis-flowchart}, reported in Appendix~\ref{appendix:LF-IBIS flowchart}. Note that, while the resampling and the moving step are coherent with the original algorithms described in Sect \ref{sec:ABC} and Sect \ref{sec:ibis}, both phases require specifically tailored reweighting steps  to work in a setting where the likelihood function is not available in closed form and inference is performed in a sequential data context.
 \subsubsection{The IBIS step}
In the standard IBIS algorithm, the reweighting factor turns out to be equal to the predictive likelihood -- see Eq \eqref{eq:ibisreweight}.  This quantity, in the RL framework, has not a closed form. For this reason we propose the following \emph{ABC-like estimate} for the reweighting factor based on ABC approximations of the two quantities, $f(x_{1:i-1}\mid \mu^{(\ell)}_{i})$ and $f(x_{1:i}\mid \mu^{(\ell)}_{i})$ as in Eq \eqref{eq:approxposterior}:
$$f(x_{i} \mid x_{1:i-1}, \mu_{i}^{(\ell)}) \approx
\tilde{f}_{\epsilon, M}(x_{i} \mid x_{1:i-1}, \mu_{i}^{(\ell)}) =
\dfrac{\sum\limits_{m=1}^{M} \kappa\left( d(y_{1:i,m}^{(\ell)}, x_{1:i}); \epsilon \right)}{\sum\limits_{m=1}^{M} \kappa\left( d(y_{1:i-1,m}^{(\ell)}, x_{1:i-1}); \epsilon \right)}.$$



A crucial detail lies in the simulation process during the IBIS step -- see step 13 in Alg \ref{alg:lfibis}. We generate $M$ pseudo-histories for each   $\mu_i^{(\ell)}$ by appending one new agent/environment interaction to the already existing simulated history. Formally, this process is denoted as:
\[
\{y^{(\ell)}_{1:i,m} \sim f(\cdot \mid \mu_i^{(\ell)},y^{(\ell)}_{1:i-1,m})\}_{m=1}^M,
\]
where each $y^{(\ell)}_{1:i,m}$ represents a full trajectory of length $i$, and where $y^{(\ell)}_{1:i-1,m}$ denotes the simulated history at the previous step from that specific particle parameter $\mu_i^{(\ell)}$.

\subsubsection{SMC-ABC step}
Despite appearing as a standard iteration of Algorithm \ref{alg:smcabc}, the SMC-ABC steps present some specific features. In particular, the size of the observed data increases throughout the algorithm, which reduces acceptance probabilities and may lead to sample degeneracy \cite{liu1998sequential, Chopin}. This behavior may have two main drawbacks: (1) the effective sample size can decrease even for a fixed $\epsilon$ value; (2) the Metropolis–Hastings acceptance ratio in the moving step may become unstable, even for large values of $M$. For these reasons, it is necessary to define summary statistics, a distance function, a kernel function, and an $\epsilon$-reduction criterion specifically tailored to this framework.
\paragraph{Acceptance-rejection criteria} We will use two alternative summary statistics $\eta(\cdot)$ with different distance functions:

\begin{itemize}
    \item \textbf{Observation-based criterion}: 
    We define an empirical distribution $P^x$, represented as a third-order tensor of dimensions $k_S \times k_S \times k_A$, where $k_S := |\mathcal{S}|$ and $k_A := |\mathcal{A}|$. The entry $P^x_{j,k,z}$ gives the following empirical probability (relative frequency) of transition, estimated from the observed history $x_{1:i} = \{(a_1,s_1,r_1),\ldots,(a_i,s_i,r_i)\}$: 
    \[
    P^x_{j,k,z}
    \;=\;
    \frac{
    \sum_{t=1}^{i-1} 
    \mathbb{I}\big\{a_t=z, s_t=j, s_{t+1}= k \big\}
    }{
    i-1
    },
    \]
    
    where $\mathbb{I}\{\cdot\}$ denotes the indicator function. Specifically, the entry in row $j$ and column $k$ corresponds to the empirical frequency of transitions from state $j$ to state $k$ when action $z$ was taken. If a state--action pair is not observed in the history, the corresponding row is conventionally set to zero.
    
    Similarly, for each simulated history $y^{(\ell)}_{1:i,m}$, we define a corresponding empirical distribution $P^y$ whose entries $P^y_{j,k,z}$ are computed in the same way as $P^x_{j,k,z}$.
    
    We then evaluate the Hellinger distance between these two empirical distributions as
    
    \begin{equation}
    \label{eq:hellinger_distance}
    d\big(y^{(\ell)}_{1:i,m}, x_{1:i}\big)
    = \sqrt{
    \frac{1}{2} \sum_{z \in \mathcal{A}} \sum_{j \in \mathcal{S}} \sum_{k \in \mathcal{S}} 
    \left( \sqrt{P^y_{j,k,z}} - \sqrt{P_{j,k,z}^x} \right)^2
    }.
    \end{equation}

    \item \textbf{Utility-based criterion}: Let $U$ denote the utility function, defined as: $U = \sum_{t=1}^{\infty} \gamma^{t-1} r_t$, where $\gamma \in [0, 1]$ is the discount factor commonly used in RL, and $r_t$ the reward received at time $t$. Given the observed history $x_{1:i}$ and a simulated history $y$ at step $i$, we compute the corresponding utilities:
    
    \begin{center}
    $U_{x_{1:i}} = \sum_{t=1}^{i} \gamma^{t-1} r_t^x, \quad U_{y_{1:i}} = \sum_{t=1}^{i} \gamma^{t-1} r^y_t$    
    \end{center}
    
    where with $r^x_t$ and $r^y_t$ we indicate the reward at time $t$ in the observed and simulated histories respectively. We then define the discrepancy measure as the Euclidean distance between the two utilities:
    
    \begin{equation}
    \label{eq:Euclidean_distance}
        d\big(y^{(\ell)}_{1:i,m}, x_{1:i}\big) = \left| U_{x_{1:i}} - U_{y^{(\ell)}_{1:i,m}} \right|
    \end{equation}

    This distance captures the difference in cumulative discounted rewards between the observed and simulated histories, emphasizing similarity in overall performance rather than specific sequence details.

\end{itemize}

\paragraph{Kernel function}
We consider the following hybrid ABC kernel:
        \begin{equation}
        \label{eq:ourkernel}
            \kappa\big(d(y_{1:i,m}^{(\ell)},x_{1:i}); \epsilon_i\big) = 
            \begin{cases}
                  1 & \text{if } d(y_{1:i,m}^{(\ell)},x_{1:i}) \leq \epsilon_i  \\
                  e^{-\dfrac{d(y_{1:i,m}^{(\ell)},x_{1:i})}{\epsilon_i^2}} & \text{if } d(y_{1:i,m}^{(\ell)},x_{1:i}) > \epsilon_i
            \end{cases}
        \end{equation} 
        This kernel combines a uniform behavior for distances below the threshold $\epsilon_i$ with an exponential decay for distances above it. It enables sharp acceptance for accurate simulations while gradually penalizing less accurate ones. By avoiding a hard cutoff, it assigns non-zero weights even to simulations that slightly exceed the threshold. Since $M$ pseudo-observations are simulated per particle, this kernel allows each of them to contribute to the approximation of the likelihood function in the particle’s weight, even if their distances exceed $\epsilon_i$. Although this contribution is exponentially small, it helps stabilize the acceptance ratio computed in the \emph{moving} step of Alg \ref{alg:lfibis}. Moreover, based on arguments similar to those introduced in \cite{Viscardi2021}, we conjecture that it mitigates particle degeneracy issues --typically affecting algorithms based on importance sampling-- which, in this setting, are further exacerbated by the fact that the size of the observed data increases throughout the algorithm, thereby raising the probability of generating pseudo-data lying at a distance larger than $\epsilon$ from the observed history. However, the use of a kernel defined on a non-compact support introduces a bias; nonetheless, this bias tends to zero as $\epsilon \to 0$ see Proposition \ref{prop:bias}.

\paragraph{Adapting the threshold}
To adapt the threshold $\epsilon$, as an alternative to the ESS criterion described in Sect.~\ref{sec:ABC}, we consider a strategy based on the \textit{number of unique particles} surviving after the resampling step performed with $\epsilon^*$ \cite{Bernton2019}.
Specifically, let $\{\mu_i ^{(\ell)}\}_{\ell=1}^L$ be $L$ particles resampled at the $i$-th iteration according to weights $\omega_i^{(\ell)}(\epsilon^*)$.  We define: $$\text{UP}(\omega_i(\epsilon^*)):= \{
 \mu \in \mathcal{M}: \exists \mu_i^{(\ell)} = \mu\}.$$ The number of unique particles corresponds to $\left|\text{UP}(\omega_i(\epsilon^*))\right|$.
The next threshold $\epsilon^*$ is then computed by solving the following equation via the bisection method: 

\begin{equation}
\left|\text{UP}\left( \omega_i(\epsilon^*) \right) \right|= \alpha \cdot \left|\text{UP} \left( \omega_i(\epsilon_i) \right)\right|.
\end{equation}
Unlike the ESS, whose evaluation depends on the variability among the particle weights \cite{elvira2022rethinking}, the number of unique particles provides a measure of the structural diversity of the population after the resampling step \cite{Bernton2019}.


\begin{algorithm}[ht]
\caption[LF-IBIS]{LF-IBIS. The corresponding flowchart is reported in Appendix~\ref{appendix:LF-IBIS flowchart}, Fig.~\ref{fig:lfibis-flowchart}.}
 \label{alg:lfibis}
{\scriptsize
\begin{algorithmic}[1]
    \State \textbf{Input:} target threshold $\epsilon$, starting threshold $\epsilon_1$, $\alpha\in(0,1)$, number of simulations $L$, number of pseudo-data $M$, final length of the history $n$, initial observed data $x_{1}$; proposal distribution $q(\cdot, \cdot)$; $\epsilon$ adaptation criterion (ESS or UP)
        \State Draw $\mu_1^{(\ell)} \sim p(\cdot)$ for $\ell \in \{1,...,L\}$
            \State  For each $\ell$, simulate $y_{1,m}^{(\ell)} \sim f(\cdot\mid\mu_1^{(\ell)})$ for $m\in \{1,...,M\}$
        \State Compute initial weights $\omega_1(\epsilon_1)=\{\omega_1^{(\ell)}(\epsilon_1)\}_{\ell=1}^L$ \hspace{1.3cm}where $\omega_1^{(\ell)}(\epsilon_1)=\dfrac{\sum_{m=1}^M \kappa\big(d(y_{1,m}^{(\ell)},x_{1}); \epsilon_1\big)}{M}$
\State $i=1$
    \While{$i\leq n$  or $\epsilon_i>\epsilon$}
        \If {\textbf{no new agent-env interactions are introduced then perfom a SMC-ABC step}}
            \State \textbf{Reweighting:} Compute \textbf{new $\epsilon^*$} by solving with bisection method: 
            
            \medskip
            \hspace{1.5cm}$\text{ESS}\big(\omega_i(\epsilon^*)\big) = \alpha \text{ESS}\big(\omega_i(\epsilon_i)\big)$ 
            \hspace{0.5cm}
            or
            \hspace{0.5cm} $\left|\text{UP}\left( \omega_i(\epsilon^*) \right) \right|= \alpha \left|\text{UP} \left( \omega_i(\epsilon_i) \right)\right|$
            
            \medskip
            \State Where \hspace{0.5cm}
            $
            \omega_i^{(\ell)}(\epsilon^*) = \omega^{(\ell)}_i(\epsilon_i) \dfrac{\sum_{m=1}^M \kappa\big(d(y_{1:i,m}^{(\ell)}, x_{1:i}); \epsilon^*\big)} {\sum_{m=1}^M \kappa\big( d(y_{1:i,m}^{(\ell)},x_{1:i}); \epsilon_{i}\big)}
            $
            \State Set $\epsilon_i=\epsilon^*$
        \Else {\textbf{ if new data is introduced}} \textbf{perfom a IBIS step}
        
                \State Set $i=i+1$ and the new $\epsilon_i$ equal to the previous one
                \State For each $\ell$, simulate a new state and append it to $y^{(\ell)}_{1:i-1,m}$, for $m\in\{1,...,M\}$ :  
                
                \medskip
                \hspace{4cm} $y^{(\ell)}_{1:i,m} \sim f(\cdot \mid \mu_{i-1}^{(\ell)}, y^{(\ell)}_{1:i-1,m})$
                \medskip
                
            \State Set $\{ \mu_i^{(\ell)}\}_{\ell=1}^L = \{ \mu_{i-1}^{(\ell)}\}_{\ell=1}^L$
            \State Compute new weights:
            $
            \omega_i^{(\ell)}(\epsilon_i) = \omega^{(\ell)}_{i-1}(\epsilon_i)\dfrac{\sum_{m=1}^M \kappa\big(d(y_{1:i,m}^{(\ell)},x_{1:i}); \epsilon_i\big)} {\sum_{m=1}^M \kappa\big(d(y_{1:i-1,m}^{(\ell)}, x_{1:i-1}); \epsilon_i\big)}
            $

        \EndIf

            \State \textbf{Resampling:} Sample $L$ pairs from $\{ (\mu_i^{(\ell)}, y_{1:i,m}^{(\ell)})\}_{\ell=1}^L$ using $\dfrac{\omega_i^{(\ell)}(\epsilon_i)}{\sum_{\ell=1}^L\omega_i^{(\ell)}(\epsilon_i)}$ as probabilities. Denote new sampled pairs, with a slight abuse of notation, by $\{(\mu_{i}^{(\ell)}, y_{1:i,m}^{(\ell)})\}_{\ell=1}^L$.
            \State Set new weights: \hspace{1cm}
            $
            \omega^{(\ell)}_i(\epsilon_i)=1
            $
\State \textbf{Moving:}
Draw $\mu^* \sim q(\mu_{i}^{(\ell)}, \cdot)$ for $\ell \in \{1,...,L\}$
                           \State For each $\ell$, simulate $y_{1:i,m}^* \sim f( \cdot \mid \mu^*)$ for $m\in\{1,...,M\}$

            \State Compute  the acceptance ratio: \hspace{0.2cm}
            $
            \text{ar}(\mu_i^{(\ell)}\rightarrow \mu^*) = \dfrac{p(\mu^*) \sum_{m=1}^M \kappa\big(d(y_{1:i,m}^*,x_{1:i}); \epsilon_i\big)q(\mu^*, \mu_{i}^{(\ell)})}{p(\mu_i^{(\ell)})\sum_{m=1}^M \kappa\big( d(y_{1:i,m}^{(\ell)},x_{1:i}); \epsilon_i\big)q(\mu_{i}^{(\ell)}, \mu^*)}
            $ for $\ell \in \{1,...,L\}$
                        \State Draw $u^{(\ell)} \sim \text{Unif}(0,1)$ for $\ell \in \{1,...,L\}$
            \If{$u^{(\ell)} < \min\{1,\text{ar}(\mu_i^{(\ell)}\rightarrow \mu^*)\}$}
                \State Set $\mu_i^{(\ell)}= \mu^*$  and $y^{(\ell)}_{1:i,m}=y^*_{1:i,m}
                $  for $\ell \in \{1,...,L\}$
            \Else 
                \State Set $\mu_i^{(\ell)}= \mu_{i-1}^{(\ell)}$  and $y^{(\ell)}_{1:i,m}=y_{1:i-1,m}^{(\ell)}$ for $\ell \in \{1,...,L\}$
            
            \EndIf
        \State \textbf{For each $\mu_i^{(\ell)}$, compute the correspondent optimal policy $\pi^*_{\mu_i^\ell}$ by solving Eq.~\eqref{eq:pi_star_mu}}
      \EndWhile     
\end{algorithmic}}
\end{algorithm}

\subsubsection{Output and stopping criterion}
LF-IBIS terminates when the maximum number of iterations is reached (i.e., $i = n$), or when $\epsilon_i < \epsilon$. 

At each iteration $i$, the algorithm produces a set of $L$ particles $\{\mu_i^{(\ell)}\}_{\ell=1}^L$, which approximate the posterior distribution over the model parameters. For each particle $\mu_i^{(\ell)}$, the corresponding optimal policy $\pi^*_{\mu_i^{(\ell)}}$ is computed by solving Eq.~\eqref{eq:pi_star_mu} via policy iteration, using the transition probabilities $\Pr(s' \mid s, a, \mu_i^{(\ell)})$. 

As a result, at each iteration we obtain a sample $\{\pi^*_{\mu_i^{(\ell)}}\}_{\ell=1}^L$ from an approximation of the posterior distribution over optimal policies.

This sequential availability of posterior samples over policies enables the application of the decision criterion introduced in Subsection~\ref{subsec:posterior_policy_update}. In particular, these samples are used to evaluate the Bayesian Mean Squared Error (BMSE) and determine whether the current policy should be updated or retained, thereby supporting an adaptive exploration--exploitation strategy.

The reliability of these estimates depends on the convergence of the particle system generated by Algorithm~\ref{alg:lfibis} to the true posterior distribution. As discussed in the next section, the resulting estimators are expected to be biased but consistent.

\subsubsection{Asymptotic properties} \label{sec:theory}
Algorithm \ref{alg:lfibis} builds upon the IBIS algorithm proposed by Chopin \cite{Chopin} and the SMC-ABC algorithm by Del Moral et al. \cite{DelMoral}.  As regards SMC-ABC, the algorithm is in turn grounded in the likelihood-based SMC algorithm in Del Moral et al.\cite{DelMoralDoucetJasra2006}. While in \cite{DelMoral} they only discuss convergence results in their adaptive framework -- i.e. when the tolerance threshold is adaptively determined--   formal proofs can be found in Beskos et al. \cite{Beskos2016}.  Regarding IBIS, since it is effectively a particle filter algorithm, estimators'  consistency  and their asymptotic normality follow from theory about IS and particle filter algorithms --see \cite{CrisanDoucet2000, DoucetFreitasGordon2001,Chopin}. Morover, in \cite{Chopin} Theorem 1 shows that, under regularity conditions (see Appendix to \cite{Chopin} ),  relative precision of batch importance sampling remains asymptotically stable as the number of observations increases, provided the proportion of new data is constant, which ensures that the number of particles required to maintain accuracy does not need to grow with the total number of observations.
However, in our framework, consistency does not directly follow from these results since the reweighting factor is replaced by ABC-like estimates.


Given an observed history $x\in\mathcal{X}^i$, for same $i\geq 1$,  an i.i.d. sample of simulated histories $y_1,\ldots,y_M$ from $f(\cdot \mid \mu)$, and a predefined ABC acceptance region $A_{\epsilon,x}$, we can compute the following two alternative approximate likelihoods:
\begin{itemize}
    \item $f_{\epsilon,M}(x\mid \mu)$ : the standard ABC estimator defined as in Eq.~\eqref{eq:lMClikapprox}
    \item $\tilde{f}_{\epsilon,M}(x\mid \mu)
:= \frac{1}{M}\sum_{m=1}^M 
\kappa\!\left(d(y_m,x);\, \epsilon\right)$, our ABC estimator based on the kernel 
$\kappa(\cdot;\cdot)$ introduced in Eq.~\eqref{eq:ourkernel}. 
\end{itemize}
In what follows Proposition \ref{prop:bias} states that our approximate likelihood $\tilde{f}_{\epsilon,M}(x\mid \mu)
$  is  a biased estimate for $f_{\epsilon,M}(x\mid \mu)$  and provides an upper bound for that bias. Proposition \ref{prop:convergence} guarantees the convergence of the corresponding approximate IBIS reweighting factor to the exact one. Moreover, empirical evidence of convergence is provided through the experiments in Sect \ref{sec:experiment}.
\begin{proposition}
\label{prop:bias}
Let $\beta(\mu;\epsilon, M) :=\big\vert\tilde{f}_{\epsilon,M}(x\mid \mu) - f_{\epsilon,M}(x\mid \mu)\big\vert$ 
be the bias induced by the proposed kernel function. For each $\mu \in \mathcal{M}$, the bias is bounded from above by the following quantity that depends on $\mu$ and $\epsilon$:
\[
\beta(\mu;\epsilon, M)
\;\le\;
L(\mu;\epsilon, M)
= \frac{1}{M}\sum_{m=1}^M 
\exp\!\left[-\frac{d(x,y_m)}{\epsilon^2}\right].
\]
\end{proposition}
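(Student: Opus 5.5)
The plan is a direct termwise comparison of the two estimators on the same simulated sample $y_1,\ldots,y_M$. Both $\tilde{f}_{\epsilon,M}(x\mid\mu)$ and $f_{\epsilon,M}(x\mid\mu)$ are averages over $m$. The first averages $\kappa(d(y_m,x);\epsilon)$ as in Eq.~\eqref{eq:ourkernel}; the second averages $\mathbb{I}_{A_{\epsilon,x}}(y_m)=\mathbb{I}\{d(y_m,x)\le\epsilon\}$ as in Eq.~\eqref{eq:lMClikapprox}. I would therefore write
\[
\tilde{f}_{\epsilon,M}(x\mid \mu) - f_{\epsilon,M}(x\mid \mu)=\frac{1}{M}\sum_{m=1}^M\Big[\kappa\big(d(y_m,x);\epsilon\big)-\mathbb{I}\{d(y_m,x)\le\epsilon\}\Big],
\]
and then bound each bracket separately.

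The individual brackets split into two cases. If $d(y_m,x)\le\epsilon$, then the kernel equals $1$ and so does the indicator, so the bracket is $0$. If $d(y_m,x)>\epsilon$, then the indicator is $0$ and the kernel equals $\exp[-d(y_m,x)/\epsilon^2]$, so the bracket equals that exponential. In both cases the bracket is nonnegative and at most $\exp[-d(x,y_m)/\epsilon^2]$, using the symmetry of $d$ to swap its arguments. The bias is therefore nonnegative, so the absolute value can be dropped. Summing and dividing by $M$ gives $\beta(\mu;\epsilon,M)\le L(\mu;\epsilon,M)$. In fact the argument yields the sharper identity $\beta=\frac{1}{M}\sum_m \exp[-d(x,y_m)/\epsilon^2]\,\mathbb{I}\{d(y_m,x)>\epsilon\}$.

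The only point needing care, and the closest thing to an obstacle, is keeping the same realisations $y_1,\ldots,y_M$ in both estimators, since the statement is pathwise rather than in expectation. One must also read $A_{\epsilon,x}$ as a region with the same distance and summary statistic used in $\kappa$. With those conventions the argument is elementary.

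As a remark I would add that $L(\mu;\epsilon,M)$ is controlled by the terms with $d>\epsilon$. Each such term is below $e^{-1/\epsilon}$, and this vanishes as $\epsilon\to0$. This supports the claim made just before the statement that the bias tends to zero with $\epsilon$.
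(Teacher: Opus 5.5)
Your argument is correct and is essentially the paper's proof: the paper also writes $\kappa(d(y_m,x);\epsilon)=\mathbb{I}_{A_{\epsilon,x}}(y_m)+\mathbb{I}_{A^c_{\epsilon,x}}(y_m)\exp[-d(x,y_m)/\epsilon^2]$, cancels the indicator sums to reach your exact identity, and then drops the complement indicator. Your version keeps the factor $1/M$ that the paper's last line omits. One small correction to your closing remark: it is that exact identity, not $L(\mu;\epsilon,M)$, that vanishes as $\epsilon\to0$ (it gives $\beta\le e^{-1/\epsilon}$), because any $y_m$ with $d(x,y_m)=0$ contributes $1/M$ to $L$ for every $\epsilon$.
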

Proof of Proposition~\ref{prop:bias} is provided in the Supplementary Material (Appendix \ref{prop:bias_proof}).

Note that this upper bound corresponds to the extreme case in which all histories simulated under $\mu$ are far from the observed history and would have been rejected under the standard uniform kernel. From Proposition \ref{prop:bias}, it also follows that as $\epsilon$ goes to zero, $\beta(\mu; \epsilon, M) \longrightarrow 0$.


\begin{proposition}
\label{prop:convergence}
Let $x_{1:i}$  and $x_{1: i-1}$ denote the observed histories  up to iteration $i$ and $i-1$, respectively.
For a given parameter vector $\mu$, let
$\{y_{1:i-1,m}\}_{m=1}^M$
be i.i.d. simulated histories generated from
$f(\cdot\mid\mu)$ and let each
$y_{i,m}$ be independently sampled from
$f(\cdot\mid y_{1:i-1,m},\mu)$.

Define the approximate predictive likelihood used in the IBIS reweighting step as
\[
\tilde f_{\epsilon,M}(x_i\mid x_{1:i-1},\mu)
=
\frac{\tilde f_{\epsilon,M}(x_{1:i}\mid\mu)}
{\tilde f_{\epsilon,M}(x_{1:i-1}\mid\mu)}
=
\frac{\sum_{m=1}^M
\kappa(d(y_{1:i,m},x_{1:i});\epsilon)}
{\sum_{m=1}^M
\kappa(d(y_{1:i-1,m},x_{1:i-1});\epsilon)},
\]
where $\kappa(\cdot,\cdot)$ is defined in
Eq.~\eqref{eq:ourkernel} and
$d(\cdot,\cdot)$ denotes the Euclidean distance.

Then,
\[
\lim_{M\to\infty}
\lim_{\epsilon\to0}
\tilde f_{\epsilon,M}(x_i\mid x_{1:i-1},\mu)
=
f(x_i\mid x_{1:i-1},\mu)
\qquad\text{a.s.}
\]

provided that
$f(x_{1:i-1}\mid\mu)>0$.
\end{proposition}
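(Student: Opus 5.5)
The plan is to take the two limits in the stated order: first $\epsilon\to0$ with $M$ and the simulated histories fixed, then $M\to\infty$ via the strong law of large numbers.

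\textbf{Inner limit ($\epsilon\to0$).} Fix $M$ and a realization of $\{y_{1:i,m}\}_{m=1}^M$. For every $m$ the distance $d(y_{1:i,m},x_{1:i})$ is a fixed nonnegative number. There are two cases.
\begin{itemize}
\item If $d=0$, then $\kappa(0;\epsilon)=1$ for every $\epsilon>0$, by the first branch of Eq.~\eqref{eq:ourkernel}.
\item If $d>0$, then for $\epsilon<d$ we are in the second branch, and $e^{-d/\epsilon^2}\to0$.
\end{itemize}
Hence $\kappa(d(y_{1:i,m},x_{1:i});\epsilon)\to\mathbb{I}\{y_{1:i,m}=x_{1:i}\}$, and the same holds at length $i-1$. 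This is also what Proposition~\ref{prop:bias} gives: the bias bound $L(\mu;\epsilon,M)$ vanishes for the terms at positive distance.

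Here I use that $\mathcal{X}$ is finite and $d$ is a genuine (Euclidean) metric on the histories or their summaries, so that $d=0$ iff the histories coincide. With summary statistics, equality would be read as equality of summaries, and the argument goes through unchanged.

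Numerator and denominator therefore converge to the counts $N_i:=\#\{m:y_{1:i,m}=x_{1:i}\}$ and $N_{i-1}:=\#\{m:y_{1:i-1,m}=x_{1:i-1}\}$. The ratio converges to $N_i/N_{i-1}$ whenever $N_{i-1}>0$.

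\textbf{Outer limit ($M\to\infty$).} The pairs $(y_{1:i-1,m},y_{i,m})$ are i.i.d. across $m$. By construction, $y_{1:i,m}$ has law $f(\cdot\mid\mu)$ on $\mathcal{X}^i$, because the appended step is drawn from the correct conditional. By the strong law of large numbers,
\[
\frac{N_{i-1}}{M}\to f(x_{1:i-1}\mid\mu)>0
\qquad\text{and}\qquad
\frac{N_i}{M}\to f(x_{1:i}\mid\mu)
\qquad\text{a.s.}
\]
Since the first limit is positive, $N_{i-1}>0$ for all sufficiently large $M$ almost surely, so the inner limit is eventually well defined. Dividing the two limits gives $f(x_{1:i}\mid\mu)/f(x_{1:i-1}\mid\mu)=f(x_i\mid x_{1:i-1},\mu)$ a.s.

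Here "$f$" denotes the full probability of the history under $\mu$. Any policy and initial-state factors are common to numerator and denominator (or are absorbed as in Eq.~\eqref{eq:likelihood}), so they cancel in the ratio.

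\textbf{Main obstacle.} The delicate point is the inner limit when $N_{i-1}=0$ for a given finite $M$. In that case both sums tend to $0$, and the ratio of two vanishing sums of exponentials is dominated by the minimal positive distances. It need not converge to anything meaningful.

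I plan to handle this by working on the almost-sure event $\Omega_0$ on which $N_{i-1}/M\to f(x_{1:i-1}\mid\mu)$. On $\Omega_0$, the bad event $\{N_{i-1}=0\}$ occurs only for finitely many $M$. Since the outer limit concerns only large $M$, the iterated limit is well defined and equals the target on $\Omega_0$; this is exactly where the hypothesis $f(x_{1:i-1}\mid\mu)>0$ is used.

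A secondary point is the coupling across $M$. I will take the simulations as the first $M$ terms of one infinite i.i.d. sequence, so that the strong law applies pathwise.
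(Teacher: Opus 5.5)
Your proof is correct and follows the same route as the paper's proof. The kernel collapses to the exact-match indicator as $\epsilon\to0$, the strong law gives $f(x_{1:i}\mid\mu)$ and $f(x_{1:i-1}\mid\mu)$ for numerator and denominator, and positivity of the latter lets you pass to the ratio. Your version is also more careful than the paper's in three places: you handle the finitely many $M$ with $N_{i-1}=0$, where the paper silently divides separate limits; you fix the coupling across $M$ explicitly; and you compute the kernel limit pointwise instead of relying on the bound of Proposition~\ref{prop:bias}, which as stated does not vanish when some $y_m=x$.
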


\begin{proof}
 Consider the numerator $\tilde{f}_{\epsilon, M}(x_{1:i} \mid \mu)$. We first take the limit $\epsilon \rightarrow 0$. From Proposition \ref{prop:bias}, it follows that 
\begin{align*}
    \lim_{\epsilon\rightarrow 0}\tilde{f}_{\epsilon, M}(x_{1:i} \mid \mu)&=\lim_{\epsilon\rightarrow 0}\Bigg[f_{\epsilon, M}(x_{1:i} \mid \mu) + \beta(\mu;\epsilon, M)\Bigg]\\
    &=\lim_{\epsilon\rightarrow 0}f_{\epsilon, M}(x_{1:i} \mid \mu)\\
    &=\lim_{\epsilon\rightarrow 0}\frac{1}{M}\sum \limits_{m=1}^M\mathbb{I}_{A_{\epsilon,x_{1:i}}}(y_{1:i,m})\\
    &=\frac{1}{M}\sum \limits_{m=1}^M\mathbb{I}_{x_{1:i}}(y_{1:i,m}) 
\end{align*}
where $\mathbb{I}_{x_{1:i}}(y_{1:i,m})$ denotes the indicator function that takes the value 1 when $y_{1:i,m}=x_{1:i}$ and 0 otherwise. This function is a Bernoulli random variable whose expected value is the probability $f(x_{1:i}\mid\mu)$.
Hence, from the Strong Law of Large Numbers it follows that
\begin{equation*}
     \lim_{M\rightarrow\infty}\lim_{\epsilon\rightarrow 0}\tilde{f}_{\epsilon, M}(x_{1:i} \mid \mu)=\lim_{M\rightarrow\infty}\frac{1}{M}\sum \limits_{m=1}^M\mathbb{I}_{x_{1:i}}(y_{1:i,m})
     = f(x_{1:i}\mid\mu) \qquad a.s.
\end{equation*}
The same arguments apply to the denominator:
\begin{equation*}
   \lim_{M\rightarrow   \infty} \lim_{\epsilon\rightarrow 0}  \tilde{f}_{\epsilon, M}(x_{1:i-1} \mid \mu) = f(x_{1:i-1}\mid\mu) \qquad a.s
\end{equation*}
Since in Algorithm~\ref{alg:lfibis}, the  condition  $f(x_{1:i-1}\mid\mu)>0$ is always satisfied because the observed history has strictly positive probability under the model, we have that
\begin{equation*}
  \lim_{M\rightarrow\infty}\lim_{\epsilon\rightarrow 0}\dfrac{\tilde{f}_{\epsilon, M}(x_{1:i} \mid \mu)}{\tilde{f}_{\epsilon, M}(x_{1:i-1} \mid \mu)}= \dfrac{f(x_{1:i}\mid\mu)}{f(x_{1:i-1}\mid\mu)}= f(x_{i} \mid x_{1:i-1}, \mu) \qquad a.s.
\end{equation*}

\end{proof}

  \begin{corollary}
\label{cor:weights}
Under the assumptions of Proposition \ref{prop:convergence}, let
\[
\tilde{\omega}_i^{(\ell)}
=
\tilde{\omega}_{i-1}^{(\ell)}
\,\tilde{f}_{\epsilon,M}
(x_i \mid x_{1:i-1},\mu^{(\ell)}) \;\; \text{and} \;\; \omega_i^{(\ell)}
=
\omega_{i-1}^{(\ell)}
\,f(x_i \mid x_{1:i-1},\mu^{(\ell)})
\]
denote the approximate importance weight computed in Algorithm~3, and 
 corresponding exact IBIS importance weight, respectively. Then, for every particle
$\mu^{(\ell)}$,
\[
\lim_{M\rightarrow\infty}
\lim_{\epsilon\rightarrow0}
\tilde{\omega}_i^{(\ell)}
=
\omega_i^{(\ell)}
\qquad \text{a.s.}
\]

\end{corollary}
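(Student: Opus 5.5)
The plan is to unroll the multiplicative weight recursion into a finite product and transfer the limit of Proposition~\ref{prop:convergence} factor by factor. Fix a particle $\mu^{(\ell)}$. Starting from a common initialization $\tilde\omega_1^{(\ell)}=\omega_1^{(\ell)}$ (after a resampling step both are reset to $1$, so it suffices to treat the weights accumulated since the last resampling, say from index $i_0$), the recursion gives
\[
\tilde\omega_i^{(\ell)}=\tilde\omega_{i_0}^{(\ell)}\prod_{j=i_0+1}^{i}\tilde f_{\epsilon,M}(x_j\mid x_{1:j-1},\mu^{(\ell)}),\qquad
\omega_i^{(\ell)}=\omega_{i_0}^{(\ell)}\prod_{j=i_0+1}^{i} f(x_j\mid x_{1:j-1},\mu^{(\ell)}).
\]
I will argue by induction on $i$, which is equivalent to working with this product. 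The base case holds trivially, since the initial weights coincide, or are both equal to $1$.

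For the inductive step, assume $\lim_{M\to\infty}\lim_{\epsilon\to0}\tilde\omega_{i-1}^{(\ell)}=\omega_{i-1}^{(\ell)}$ almost surely. Proposition~\ref{prop:convergence} applied at index $i$ gives $\lim_{M}\lim_{\epsilon}\tilde f_{\epsilon,M}(x_i\mid x_{1:i-1},\mu^{(\ell)})=f(x_i\mid x_{1:i-1},\mu^{(\ell)})$ almost surely, provided $f(x_{1:i-1}\mid\mu^{(\ell)})>0$. As noted in the proof of that proposition, this positivity holds in the algorithm's setting. The conclusion then follows from the product rule for limits. Both limits are finite real numbers: the $\epsilon$-limits are ratios of empirical frequencies with nonzero denominators for $M$ large, and the target quantities are probabilities. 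Hence the iterated limit of the product equals the product of the iterated limits.

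The main obstacle is that the limits are iterated, with $\epsilon\to0$ first for fixed $M$ and then $M\to\infty$. For the product rule to apply at the inner stage, each factor must have a finite $\epsilon$-limit for each fixed $M$. The kernel in Eq.~\eqref{eq:ourkernel} makes every denominator $\sum_m\kappa(\cdot;\epsilon)$ strictly positive for $\epsilon>0$. In the limit, however, the denominator becomes $\frac1M\sum_m\mathbb{I}_{x_{1:i-1}}(y_{1:i-1,m})$, which can vanish for small $M$. I would handle this by observing that, by the strong law and the condition $f(x_{1:i-1}\mid\mu)>0$, almost surely this denominator is positive for all $M$ beyond some random $M_0$. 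Since only the behaviour as $M\to\infty$ matters, it suffices to consider $M\ge M_0$.

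A further subtlety is that the weights at different steps are built from the same nested simulated trajectories, so the factors are dependent. This does not affect the argument, because almost sure convergence of each of the finitely many factors holds on a common probability-one event (a finite intersection), and the product rule applies pathwise on that event. Finally, if $\tilde\omega_{i-1}^{(\ell)}$ depends on $\epsilon$ only through earlier kernel evaluations at the same $\epsilon$, the inner limit of the product is again the product of the inner limits, which closes the induction.
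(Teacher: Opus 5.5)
Your proposal is correct and takes the same route as the paper, whose proof simply substitutes the limit from Proposition~\ref{prop:convergence} into the recursive weight update; your induction with the product rule for limits is that argument made explicit. The extra care you add does not change the approach: restricting to $M \ge M_0$ so the limiting denominators are almost surely positive, and intersecting finitely many probability-one events to handle dependence among the nested simulated trajectories, fills in details the paper leaves implicit.
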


\begin{proof}
The result follows immediately from Proposition~\ref{prop:convergence} by substituting the approximate predictive likelihood with its limiting value in the recursive weight update.
\end{proof}
 Corollary~\ref{cor:weights} provides a theoretical justification for the reweighting step employed by Algorithm~\ref{alg:lfibis}. In the asymptotic regime $M\to \infty$ and $\epsilon \to 0$, the proposed likelihood-free reweighting scheme recovers the exact IBIS reweighting mechanism.
 
\section{Experiments} \label{sec:experiments}
We evaluate our algorithm in a controlled setting --where exact Bayesian inference is tractable-- and in scenarios where no closed-form posterior is available. This allows us to assess three complementary aspects. First, by comparing LF-IBIS with the exact posterior, we verify that the approximation induced by the ABC tolerance~$\epsilon$ remains negligible and that the method provides a good approximation to the true posterior distribution. Second, we compare LF-IBIS against acceptance--rejection ABC (AR-ABC), which represents the state of the art in likelihood-free inference in RL, as used in \cite{Dimitrakakis}, and is executed in an offline manner using the complete observed history available at the final time. Third, for both summary statistics and distance metric considered—the Hellinger distance on observation-based statistics and the Euclidean distance on utility-based statistics—we report results for LF-IBIS under the two adaptive thresholding criteria: the one based on the ESS and the one based on the number of UP.

The experiments are conducted in an online setting in which data arrive sequentially through real-time interactions with the environment, while keeping the policy fixed throughout the entire execution. In a separate experiment, we show how to exploit the full posterior distribution of the optimal policy for adaptation, demonstrating that LF-IBIS naturally supports the exploration–exploitation trade-off as posterior uncertainty is progressively reduced.

Once no further data are available, the algorithm continues to decrease the tolerance~$\epsilon$ until the target value is reached.

For comparability across methods, the observed history is kept identical across all experiments.

\paragraph{Tuning parameters}

The parameter $M$ is fixed to $M = 50$ across all experiments, while the number of particles $L$ is chosen according to the complexity of the problem and, in particular, to the informativeness of the summary statistics: larger values of $L$ are used when the statistics are less informative, in order to ensure a more accurate approximation of the posterior distribution.

Finally, the parameter $\alpha$ is calibrated empirically in a preliminary phase, once $M$ and $L$ are fixed. In particular, $\alpha$ is selected so as to control the proportion of particles discarded at each iteration, ensuring that it is appropriate for the complexity of the experiment and for the chosen values of $M$ and $L$.
Table~\ref{tab:tuning_parameters} reports the tuning parameters used in each experimental setting.

\begin{table}[H]
\centering
\caption{Tuning parameters used in Experiments 1 and 2.}
\label{tab:tuning_parameters}

\begin{tabular}{llcccc}
\toprule
Experiment & Method & Parameter & $L$ & $M$ & $\alpha$ \\
\midrule

\multirow{4}{*}{Exp. 1 (Observation-based)} 
& LF-IBIS (ESS) & $\mu$ 
& 50{,}000 & 50 & 0.99 \\

& LF-IBIS (UP) & $\mu$ 
& 50{,}000 & 50 & 0.98 \\

\cmidrule{2-6}

& LF-IBIS (ESS) & $\beta$ 
& 50{,}000 & 50 & 0.99 \\

& LF-IBIS (UP) & $\beta$ 
& 50{,}000 & 50 & 0.98 \\

\midrule

\multirow{4}{*}{Exp. 2 (Utility-based)} 
& LF-IBIS (ESS) & $\mu$ 
& 100{,}000 & 50 & 0.92 \\

& LF-IBIS (UP) & $\mu$ 
& 100{,}000 & 50 & 0.95 \\

\cmidrule{2-6}

& LF-IBIS (ESS) & $\beta$ 
& 100{,}000 & 50 & 0.93 \\

& LF-IBIS (UP) & $\beta$ 
& 100{,}000 & 50 & 0.95 \\

\bottomrule
\end{tabular}
\end{table}

\subsection{Experimental Setting}
Response-Adaptive Randomization (RAR) is a special adaptive design used in clinical trials to update the allocation to treatments of incoming study participants to favour the most promising and beneficial treatments as indicated by interim results \cite{wilson2025rar,robertson2023rar}. The application of RL to RAR is promising for the potentiality of RL to allow dynamic updates to randomization probabilities based on real‐time predictions of treatment success \cite{deliu2025errorcontrol, Merrell2023}. In this setting, the environment represents patients, with states corresponding to their treatment responses. We assume there are $N$ patients entering the trial sequentially, and that each patient’s response is observed before the allocation of the next patient. The agent’s goal is to assign the $n$-th patient ($1\leq n\leq N$) to treatment or placebo in order to maximize the number of disease remissions.

Here, the action space is $\mathcal{A} = \{0,1\}$, and each action $A_n$ is a Bernoulli random variable representing the assignment of the $n$-th patient to treatment ($A_n=1$) or control ($A_n=0$). The policy $\pi \in [0,1]$ corresponds to the Bernoulli parameter governing the i.i.d. sequence of random variables $\{A_n\}_{n=1}^N$.

The environment is characterized by two possible states:  $\mathcal{S} = \{0,1\}$. If the $n$-th patient achieves remission, the state is $S_n = 1$; otherwise, $S_n = 0$. The responses of the environment are two Bernoulli random variables: $ Z_0 \sim \text{Ber}(\mu_0)$ and $ Z_1 \sim \text{Ber}(\mu_1)$, where $Z_0$ denotes  $S_n\mid A_n=0$ and $Z_1$ denotes  $S_n\mid A_n=1$, while $\mu_0$ and $\mu_1$  are the probabilities of remission
 in the control and treatment group, respectively.
 
The reward depends on both the observed state and the treatment decision.
Let us suppose that in our experiments a collateral effect may occur with probability $0.7$ when treatment is administered. In the absence of treatment, it is assumed that the reward coincides with the state, namely $$R_n(a_n=0, s_n=s) =s  \qquad \text{for }s\in\{0,1\}.$$ 
When treatment is administered, conditional on treatment, the reward is assumed to be:

\[
R_n(a_n=1, s_n=0) =
\begin{cases}
-0.2, & \text{if a collateral effect occurs}, \\
0,    & \text{if no collateral effect occurs}.
\end{cases}
\]

\[
R_n(a_n=1, s_n=1) =
\begin{cases}
0.8, & \text{if a collateral effect occurs}, \\
1,   & \text{if no collateral effect occurs}.
\end{cases}
\]
Note that, while in the absence of treatment the reward is deterministic, under treatment the reward is a random variable that depends on both the observed state $s_n$ and the treatment decision $a_n$, as well as on the independently sampled collateral effect.

The observed history at time $n$ (corresponding to the $n$-th patient) is defined as $x_{1:n} = \{(a_1,s_1,r_1), \dots , (a_n,s_n,r_n)\} \in \mathcal{X}^n$, where $r_n$ denotes the realization for  $R_n(a_n,s_n)$. 

Our objective is to perform inference on $\mu_0$ and $\mu_1$ in order to learn the optimal policy, conditional on the history up to time $n$:

\begin{center}
    $p(\pi^*, \mu_0, \mu_1 \mid x_{1:n})$.
\end{center}

We assume uninformative Beta priors for the parameters, i.e., $\mu_0, \mu_1, \pi \sim \text{Beta}(1,1)$. To move the particles, we construct proposal distributions of the form $q(\mu_{0,i-1}^{(\ell)}, \cdot)$, which at iteration $i$ are defined as Beta distributions whose parameters are chosen via the method of moments. This ensures that the proposal is centered at the previously accepted particle $\mu_{0,i-1}^{(\ell)}$ and that its variance matches twice the empirical variance of the set of previously accepted particles $\{\mu_{0,i-1}^{(\ell)}\}_{\ell=1}^L$.
An analogous construction is used for $q(\mu_{1,i-1}^{(\ell)}, \cdot)$. Although such Beta proposal distributions are not standard, their design is inspired by the adaptive proposal strategy widely used for multivariate normal proposals, as popularized in the adaptive ABC framework of Beaumont et al.~\cite{beaumont2009adaptive}. Full details of the proposal definition are provided in the Appendix \ref{sec:beta_proposal}.

The clinical application described so far can also be faced 
by expressing $\mu_0$ and $\mu_1$ as functions of alternative parameters:
\[
    \mu_0 = \frac{1}{1 + e^{-\beta_0}}, 
    \qquad
    \mu_1 = \frac{1}{1 + e^{-(\beta_0 + \beta_1)}},
\]
with $\beta_0, \beta_1 \in \mathbb{R}$. 
This inverse-logit (expit
) transformation allows us to work with parameters defined on the entire real line. Under this parametrization, one can adopt normal priors for $\beta_0$ and $\beta_1$; 
however, these priors are not conjugate, and as a result the posterior distributions do not have closed-form expressions. An additional advantage of this logistic representation is that it naturally accommodates the inclusion of covariates (e.g., patient-specific characteristics) \cite{Deliu}. Under this parametrization, the target distribution becomes the posterior $p (\pi, \beta_0, \beta_1 \mid x_{1:n})$
 from which $p(\pi,\mu_0, \mu_1 \mid x_{1:n})$ can be recovered through the expit transformation.
The prior distributions used in this 
setting are
$
\beta_0 \sim \mathcal{N}(0, 16), 
\beta_1 \sim \mathcal{N}(2, 16).
$
The proposal distribution at iteration \(i\) is defined as a bivariate Normal distribution centered at the empirical mean of the 
particles alive at the previous iteration $\hat{m}_{i-1}$, with covariance matrix $\widehat{\Sigma}_{i-1}$ equal to twice their  empirical covariance ~\cite{beaumont2009adaptive}:
\[
q\big( (\beta_{0,i}^{(\ell)}, \beta_{1,i}^{(\ell)}) \mid \{ (\beta_{0,i-1}^{(\ell)}, \beta_{1,i-1}^{(\ell)}) \}_{\ell=1}^L \big) 
= \mathcal{N}\big( \hat{m}_{i-1}, \; 2 \widehat{\Sigma}_{i-1} \big).
\]



\subsection{Experimental Results \label{sec:experiment}}

In the following, we present results for the described application in both its original formulation and the 
version based on logistic transformation. The observed history $x_{1:N}$ was generated using the true values \( \mu_0^{\text{true}} = 0.3 \) and \( \mu_1^{\text{true}} = 0.8 \), that, under the logistic formulation, correspond to the following values for $\beta_0^{\text{true}}$ and $\beta_1^{\text{true}}$:
$\beta_0^{\text{true}} = \operatorname{logit}(\mu_0^{\text{true}}) \approx -0.85, 
\qquad
\beta_1^{\text{true}} = \operatorname{logit}(\mu_1^{\text{true}}) - \beta_0^{\text{true}} \approx 2.24.
$

In the first experiment with observation-based statistics and Hellinger distance, the final threshold $\epsilon_{\mathrm{final}}$ was determined from the AR-ABC simulations. Specifically, $\epsilon_{\mathrm{final}}$ was chosen as the empirical $0.15\%$ quantile of the simulated distance distribution. 
In more challenging settings, such as the utility-based experiment, AR-ABC becomes substantially less efficient. In such cases, $\epsilon_{\mathrm{final}}$ is calibrated empirically by considering candidate tolerance values and assessing the quality of the resulting posterior approximations. Whenever the exact Bayesian posterior is available, it is used as a benchmark to evaluate the agreement between the approximate and exact posteriors.


\subsubsection{Experiment 1: Hellinger distance on observation-based statistics}
\label{sec:experiment1}
Figure~\ref{fig:trial_mu_1} presents the posterior distributions for $\mu_0$ and $\mu_1$ obtained using LF-IBIS with ESS and LF-IBIS with UP (left panels), along with the corresponding learned policies (right panels). Both variants of the LF-IBIS algorithm yield posterior approximations that are consistent with exact Bayesian inference, similarly AR-ABC approach.
The final threshold value was $\epsilon_{\mathrm{final}}=0.02165$. With this threshold, AR-ABC was run with 3M simulated samples, yielding 4,936 accepted particles.
For the LF-IBIS experiments (both ESS- and UP-based), the initial history length was set to 3, the initial threshold to $\epsilon = 1$, and the final history length to 48. The tuning parameters $M$, $L$, and $\alpha$ were set as reported in Table~\ref{tab:tuning_parameters}. The average execution time was approximately 23 minutes for LF-IBIS with ESS and 29 minutes for LF-IBIS with UP. The total number of iterations required to reach the target tolerance was 94 for the ESS-based version and 89 for the UP-based version.

\begin{figure}[htbp]
    \begin{center}
    \begin{tabular}{c}
    \includegraphics[scale=0.175]{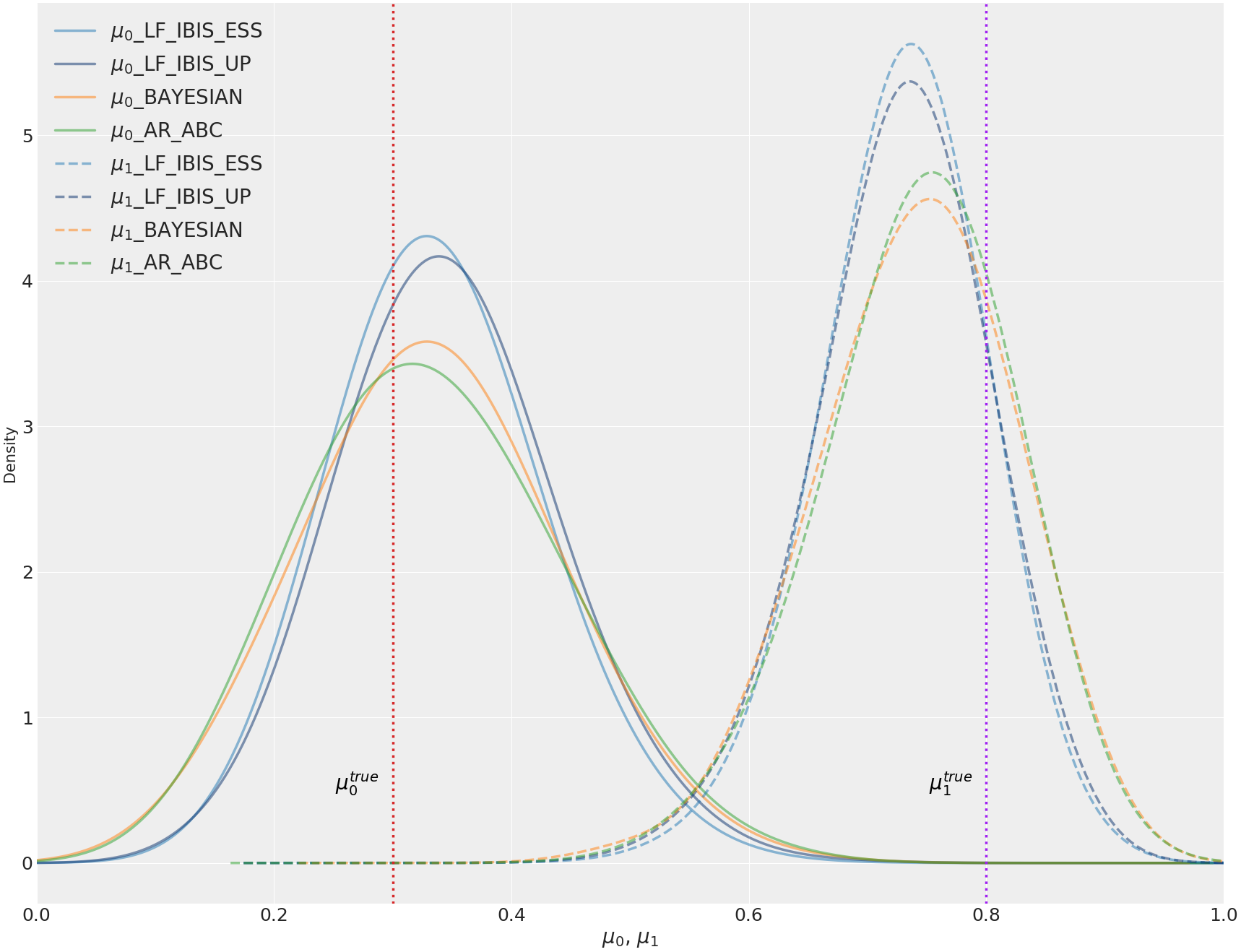}
    \includegraphics[scale=0.175]{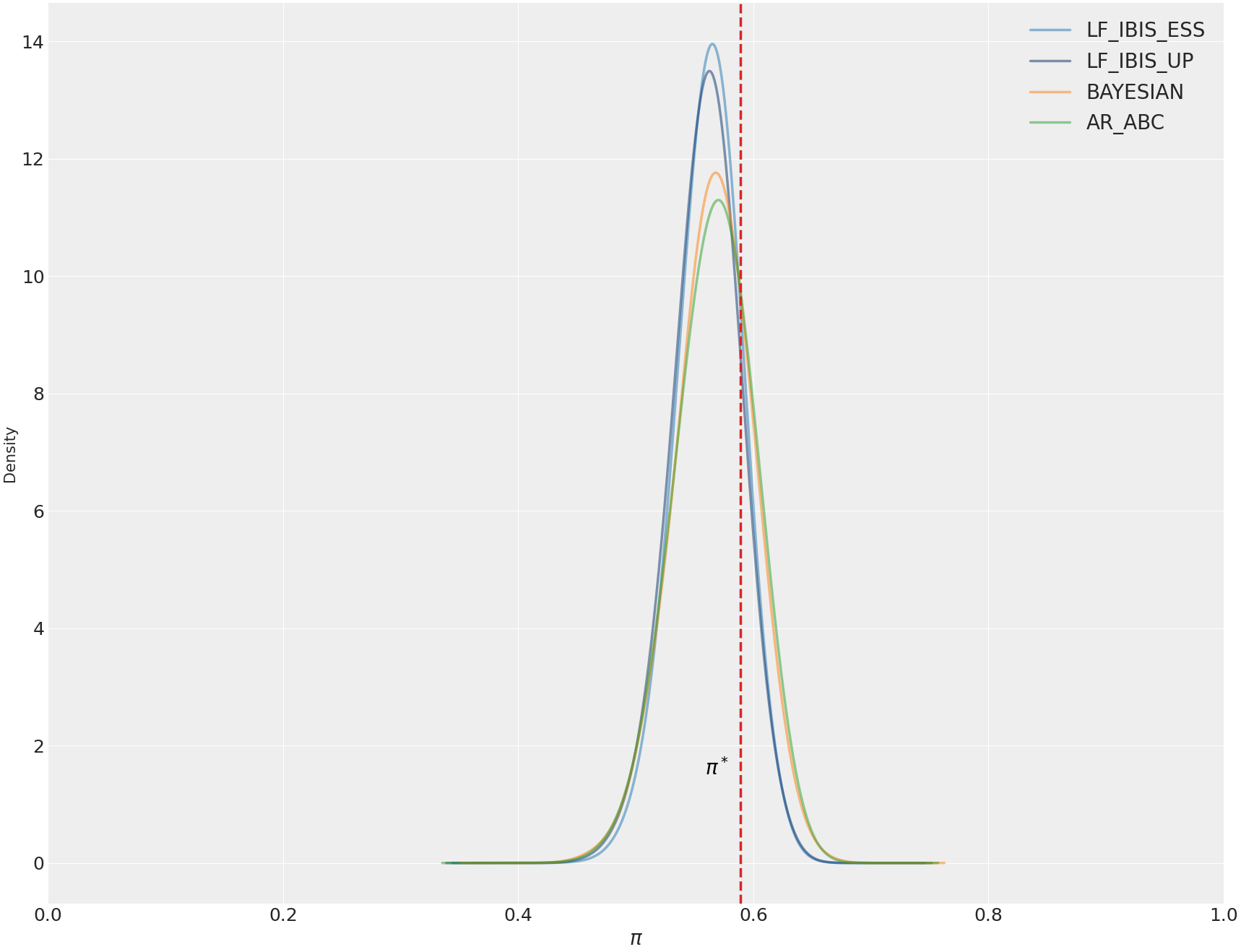} \\
    \end{tabular}
    
    \caption{
    Posterior distributions of $\mu$ (left panel) and the corresponding policies (right panel), when using Hellinger distance on observation-based statistics.
    Light blue: LF-IBIS with ESS.
    Dark blue: LF-IBIS with UP.
    Orange: exact Bayesian inference.
    Green: AR-ABC.
    The dashed vertical lines indicate $\mu_0^{\text{true}}$ and $\mu_1^{\text{true}}$ in the left panel,
    and the corresponding true posterior optimal policy $\pi^*$ in the right panel,
    computed via policy iteration using $\mu_0^{\text{true}}$ and $\mu_1^{\text{true}}$.
    }
    \label{fig:trial_mu_1}
    \end{center}
\end{figure}

It is worth noting that the posterior distributions obtained with LF-IBIS appear to be slightly more concentrated than those produced by both exact Bayesian inference and the AR-ABC method. This difference is expected, as both exact Bayesian inference and AR-ABC are performed in an offline setting using the complete observed history, whereas LF-IBIS processes observations sequentially as they become available. Consequently, the posterior distributions obtained by these approaches cannot be expected to coincide exactly. Nevertheless LF-IBIS yields posterior distributions that are very close to those of the offline methods and remain highly consistent in shape.

Figure~\ref{fig:trial_beta_1} reports the results obtained when the problem is reparameterized using the logistic transformation. In this setting, the reparameterization makes exact Bayesian inference intractable. The parameter settings for both LF-IBIS and AR-ABC are identical to those used in Figure~\ref{fig:trial_mu_1}, and the same considerations regarding the quality of the approximations apply here as well. With the same threshold value $\epsilon = 0.02165$, the AR-ABC procedure accepted 1,355 samples out of 3M simulated draws. The average execution time was approximately 35 minutes with 125 iterations for LF-IBIS with ESS, and approximately 36 minutes with 108 iterations for LF-IBIS with UP. Another noteworthy point is that the UP strategy seems to require fewer iterations to reach the target tolerance~$\epsilon$, in both the current example reparameterized via the logistic transformation and in the previous one.

\begin{figure}[H]
    \begin{center}
    \begin{tabular}{c}
    \includegraphics[scale=0.175]{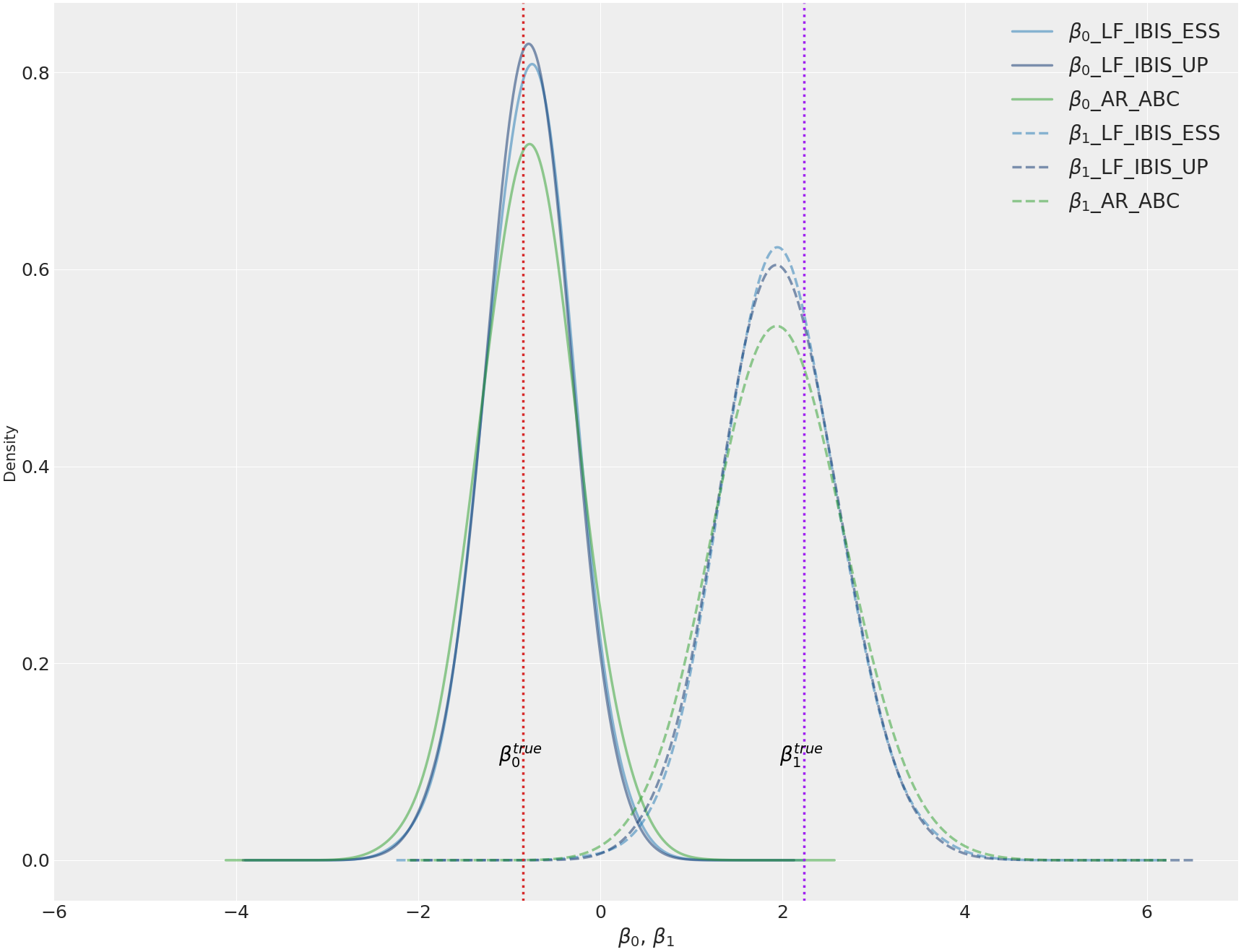}
    \includegraphics[scale=0.175]{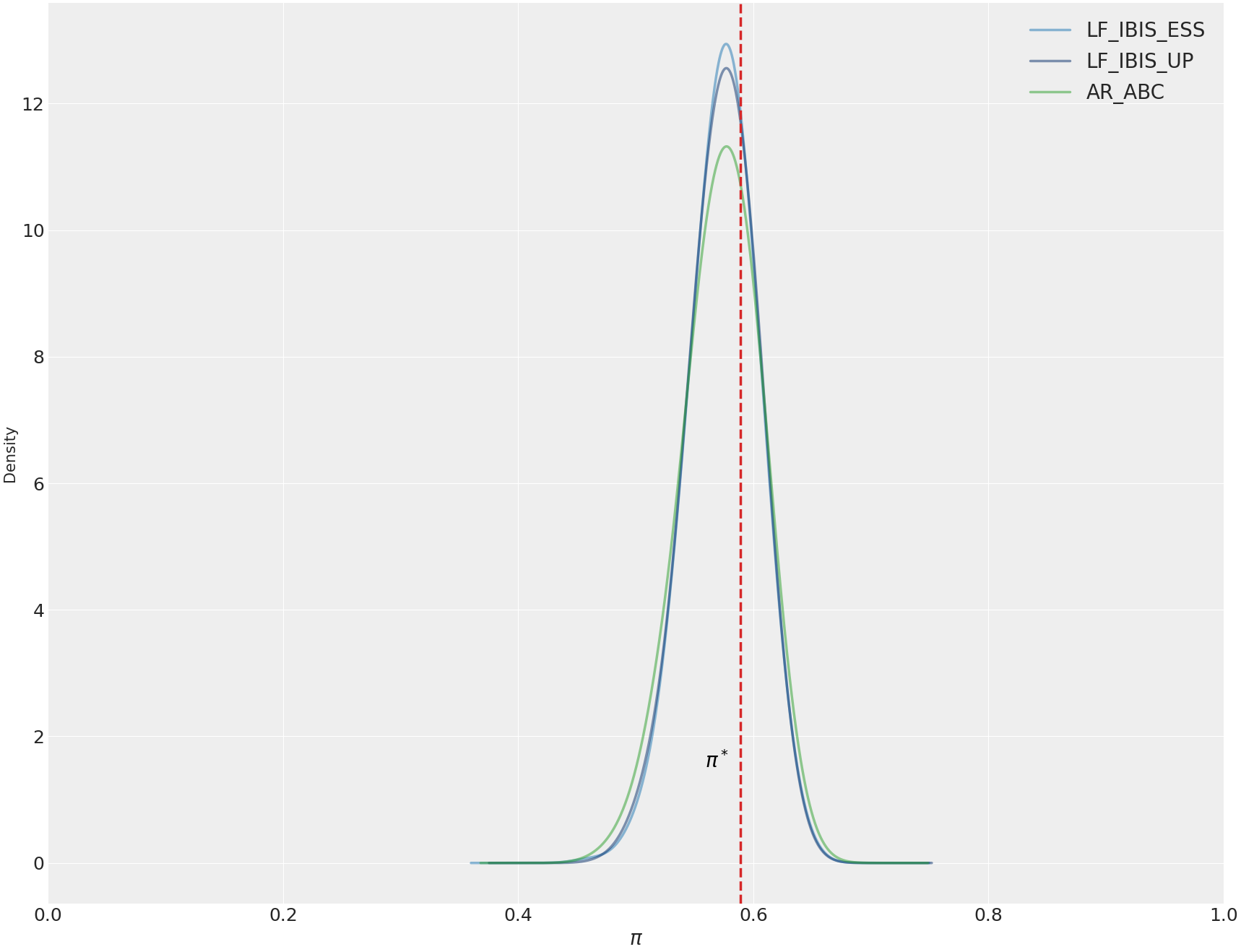}\\
    \end{tabular}
    \caption[]{\footnotesize 
    Posterior distributions of $\beta_0$ and $\beta_1$ (left panel) and the corresponding policies (right panel), when using Hellinger distance on observation-based statistics.
    Light blue: LF-IBIS with ESS.
    Dark blue: LF-IBIS with UP.
    Green: AR-ABC.
    The dashed red vertical line indicates $\beta_0^{\text{true}}$ and $\beta_1^{\text{true}}$ in the left panel,
    and the corresponding true posterior optimal policy $\pi^*$ in the right panel,
    computed via policy iteration using $\beta_0^{\text{true}}$ and $\beta_1^{\text{true}}$.}
    \label{fig:trial_beta_1}
    \end{center}
\end{figure}

\subsubsection{Experiment 2: Euclidean distance on utility-based statistics}
\label{sec:experiment2}
Figure~\ref{fig:trial_mu_2} shows the results obtained using utility-based statistics. In this case, all results were produced with a final threshold value of $\epsilon = 2.1 \times 10^{-5}$.
For LF-IBIS with ESS, the initial history length was set to 3, the initial threshold to $\epsilon = 0.5$, and the final history length to 48. The tuning parameters $M$, $L$, and $\alpha$ were set as reported in Table~\ref{tab:tuning_parameters}. The average execution time was approximately 16 minutes for LF-IBIS with ESS (with 103 iterations) and approximately 27 minutes for LF-IBIS with UP (with 110 iterations).

\begin{figure}[H]
    \begin{center}
    \begin{tabular}{c}
    \includegraphics[scale=0.175]{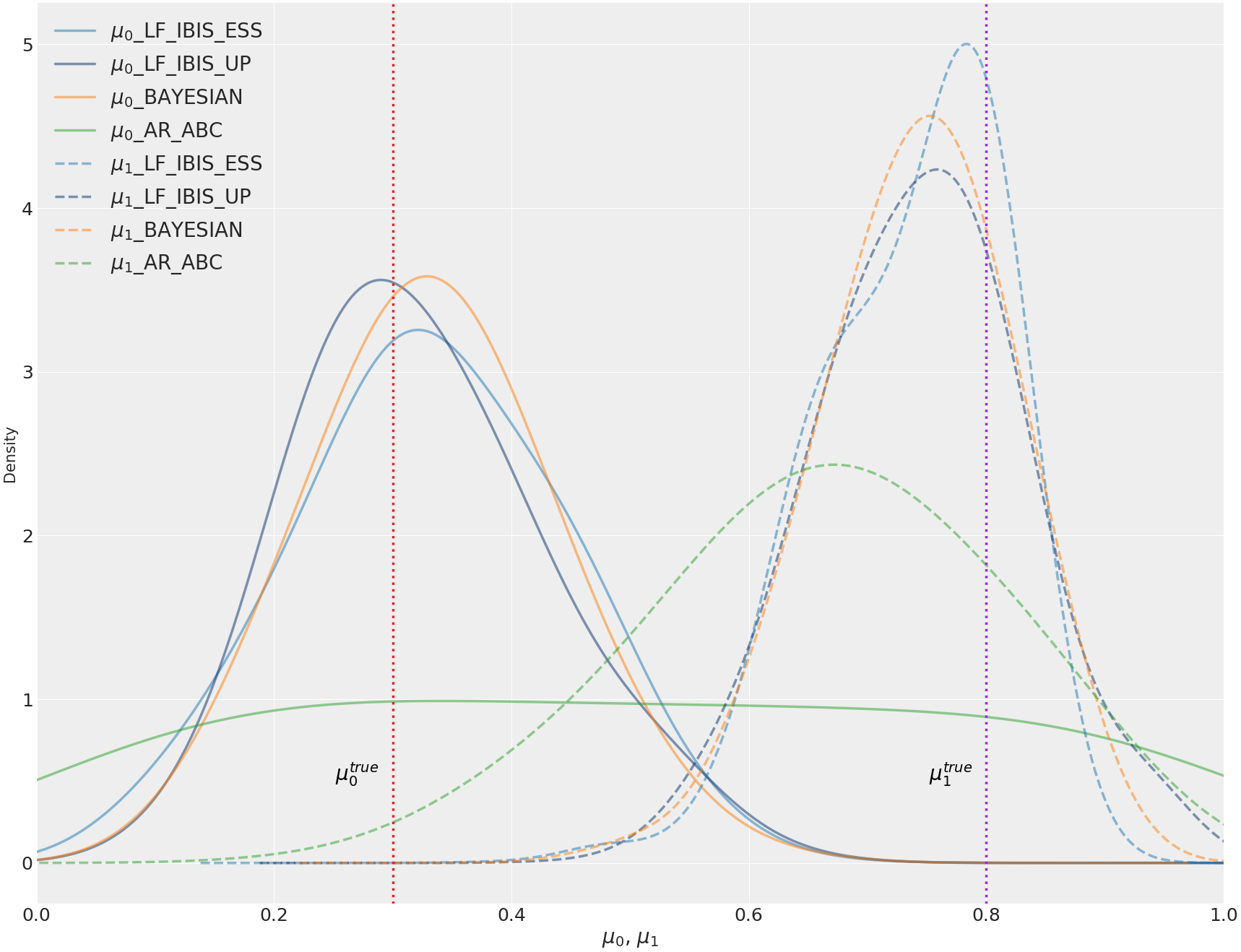}
    \includegraphics[scale=0.175]{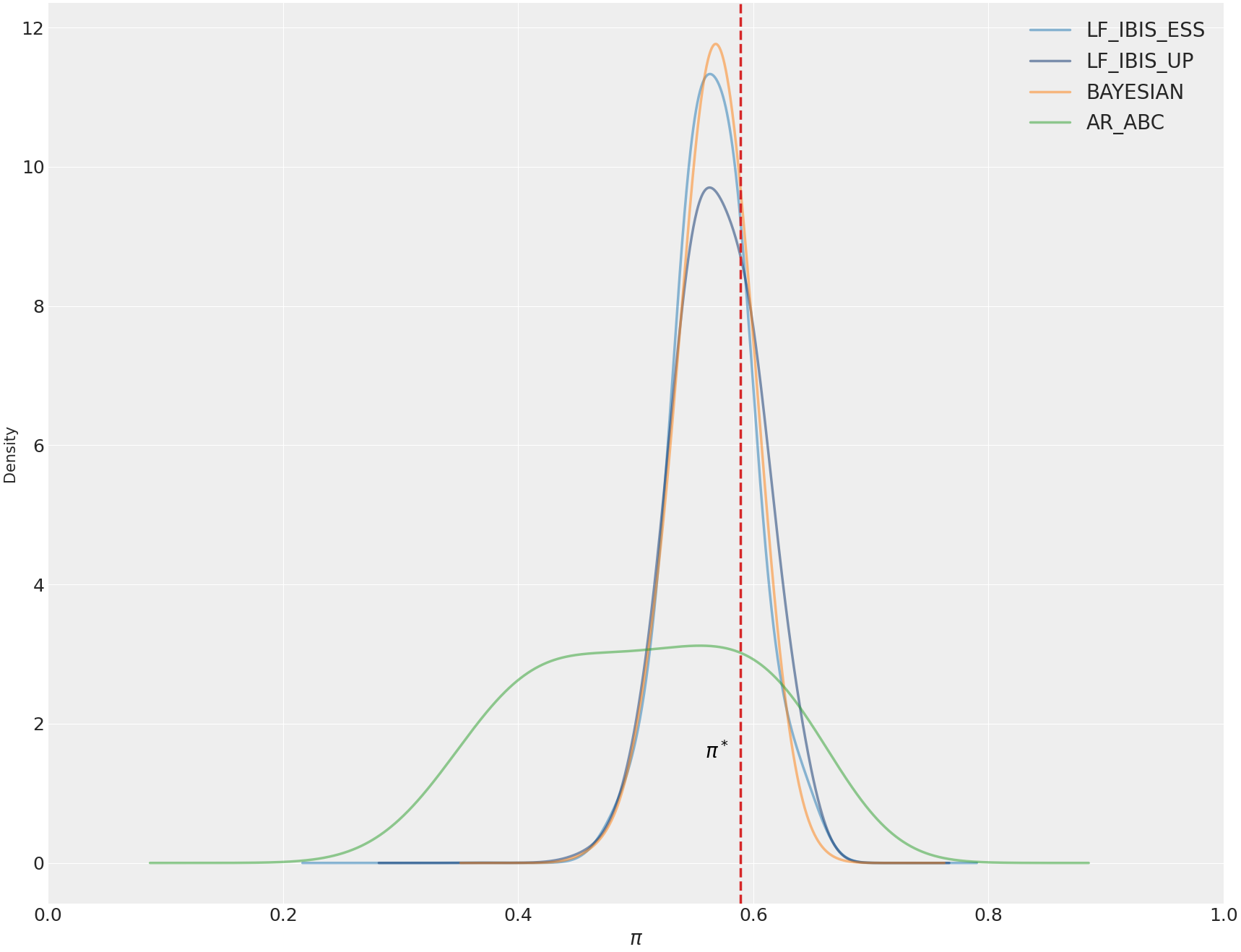} \\
    \end{tabular}
    \caption[]{\footnotesize Posterior distributions of $\mu$ (left panel) and the corresponding policies (right panel), when using Euclidean distance on utility-based statistics.
    Light blue: LF-IBIS with ESS.
    Dark blue: LF-IBIS with UP.
    Orange: exact Bayesian inference.
    Green: AR-ABC.
    The dashed red vertical line indicates $\mu_0^{\text{true}}$ and $\mu_1^{\text{true}}$ in the left panel,
    and the corresponding true posterior optimal policy $\pi^*$ in the right panel,
    computed via policy iteration using $\mu_0^{\text{true}}$ and $\mu_1^{\text{true}}$.
    }
    \label{fig:trial_mu_2}
    \end{center}
\end{figure}

In this experiment, unlike in the previous one, the posterior approximation obtained with LF-IBIS appears to be more accurate than that produced by AR-ABC. In the previous experiment, observation-based statistics were employed; these are constructed from empirical distributions and therefore tend to preserve more information about the underlying parameters~$\mu$. In contrast, the utility-based statistics used here constitute a suboptimal summary of the data, as they aggregate information into coarse measures and consequently convey less information about~$\mu$. 
In this setting, the sequential (online) nature of LF-IBIS, as opposed to the fully offline approach adopted by AR-ABC—which processes the entire observed history at once—appears to effectively compensate for the reduced informativeness of the utility-based statistics, resulting in a more coherent posterior approximation.

Despite being less informative than observation-based summaries, utility-based statistics can be considerably more practical in applied settings. Our experimental framework is intentionally simplified, as we consider an MDP for which informative statistics of the process are known. In more realistic applications, however, such knowledge may be unavailable. For instance, in 
model-free contexts or in problems involving complex or partially observed dynamics, identifying sufficient statistics can be extremely challenging. In such cases, utility-based statistics offer a flexible alternative.

An additional consequence of the reduced informativeness of the utility-based statistics is the need for a larger particle population to achieve stable inference. Specifically, we employ $L = 100{,}000$ particles, compared to $L = 50{,}000$ in the observation-based setting. Nevertheless, utility-based statistics are computationally less demanding, as they do not require the construction of empirical distributions at each iteration. This results in shorter execution times: LF-IBIS with utility-based statistics completes in approximately 16--27 minutes (for both the ESS- and UP-based variants), compared to roughly 23--29 minutes for the observation-based setting.

Finally, the tuning parameter $\alpha$ was reduced relative to the previous experiments. In the current study, we set $\alpha = 0.92$ for the ESS-based variant and $\alpha = 0.95$ for the UP-based variant. Smaller values of $\alpha$ allow particles with negligible importance weights to be pruned and replaced earlier in the procedure, thereby accelerating adaptation of the particle population to the observed data.




Figure~\ref{fig:trial_beta_2} reports the results obtained with utility-based statistics when the problem is reparameterized through the logistic transformation. The parameter values for LF-IBIS and AR-ABC are the same as in Figure~\ref{fig:trial_mu_2}, and the same considerations regarding the results apply here as well. For the AR-ABC, 693 samples are accepted. The execution time is $\approx 18$ minutes with 109 iterations for the ESS version and $\approx 29$ minutes with 116 iterations for the UP version.

\begin{figure}[H]
    \begin{center}
    \begin{tabular}{c}
    \includegraphics[scale=0.175]{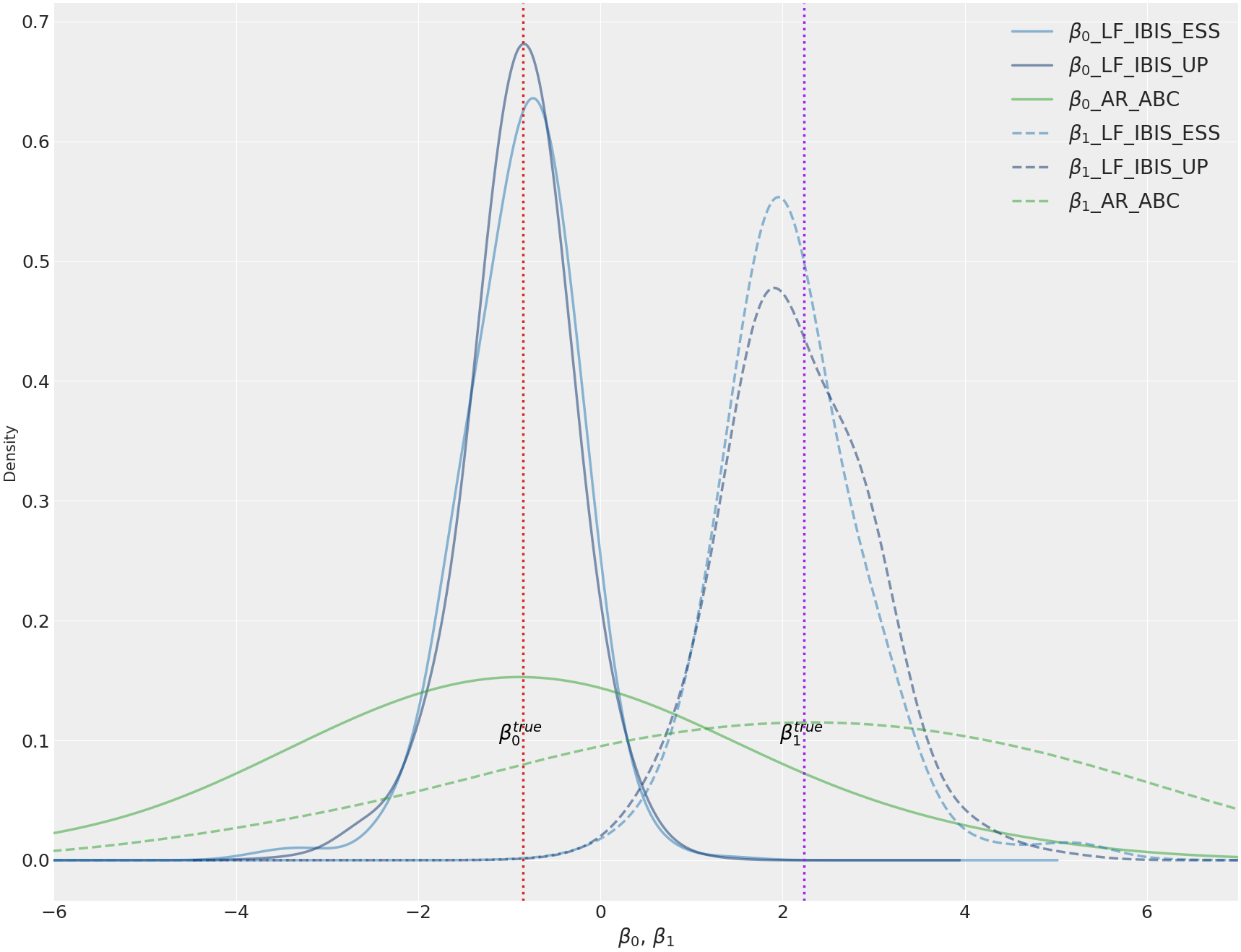}
    \includegraphics[scale=0.175]{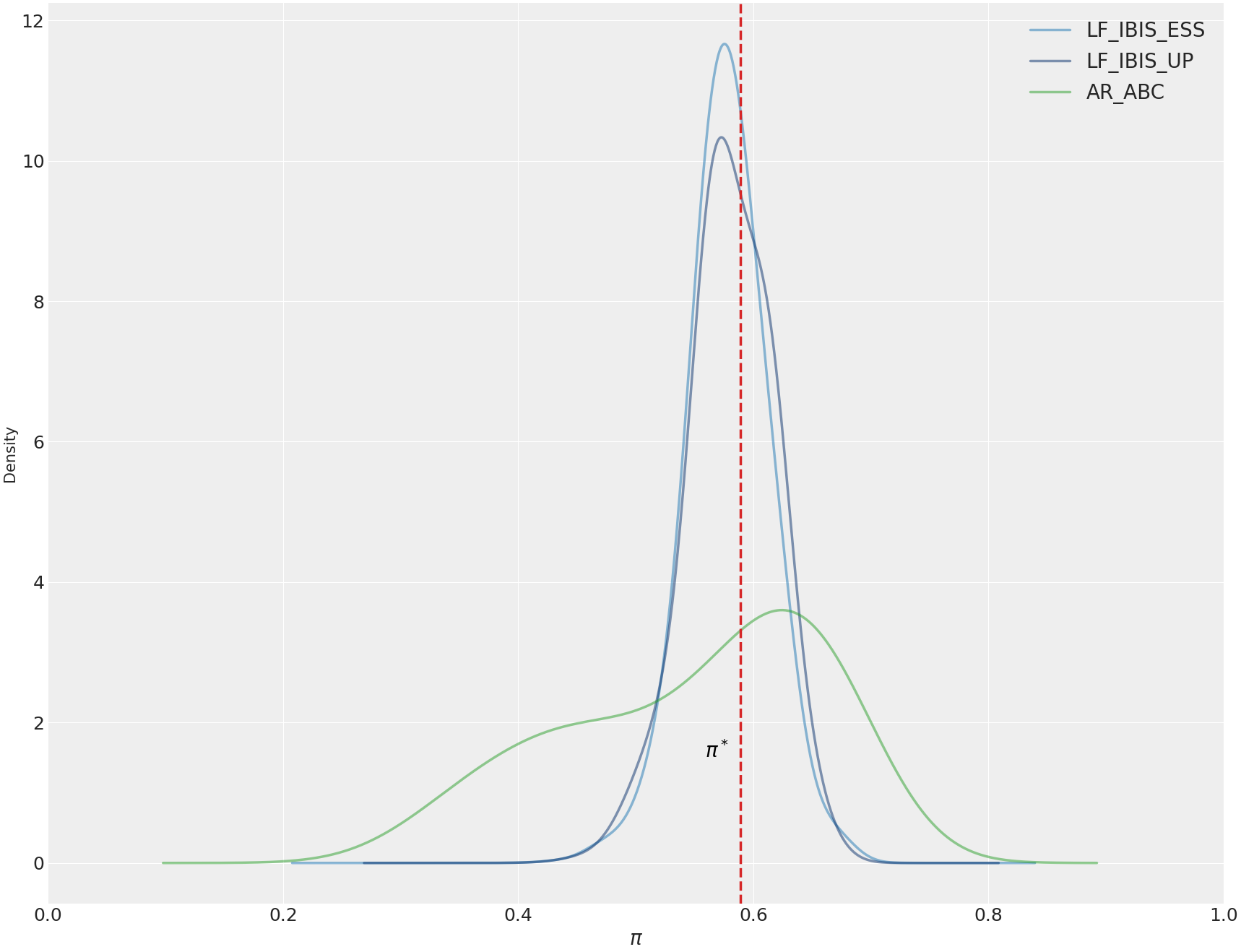}\\
    \end{tabular}
    \caption[]{\footnotesize 
    Posterior distributions of $\beta_0$ and $\beta_1$ (left panel) and the corresponding policies (right panel), when using Euclidean distance on utility-based statistics.
    Light blue: LF-IBIS with ESS.
    Dark blue: LF-IBIS with UP.
    Green: AR-ABC.
    The dashed red vertical line indicates $\beta_0^{\text{true}}$ and $\beta_1^{\text{true}}$ in the left panel,
    and the corresponding true posterior optimal policy $\pi^*$ in the right panel,
    computed via policy iteration using $\beta_0^{\text{true}}$ and $\beta_1^{\text{true}}$.}
    \label{fig:trial_beta_2}
    \end{center}
\end{figure}

Table~\ref{tab:experiment_settings} summarizes the experimental settings adopted in the two experiments. The table includes both algorithmic hyperparameters (e.g., particle population size and tuning parameters) and execution-related measures, such as the number of LF-IBIS iterations and runtime, as well as the final ESS for LF-IBIS and the total number of simulated samples and accepted particles for AR-ABC.
It is important to note that in both the observation-based and utility-based experiments, although the UP variant often converges in the same or fewer iterations compared to the ESS-based approach, it is generally more time-consuming overall. This is because each application of the bisection method requires an additional resampling step.

\begin{table}[H]
\centering
\caption{Summary of experimental settings for Experiments 1 and 2.}
\label{tab:experiment_settings}
\resizebox{\textwidth}{!}{
\begin{tabular}{llccccccc}
\toprule
Experiment & Method & Param. & $L$ & $M$ & $\alpha$ & $\epsilon_{\text{final}}$ & Iterations / Time & \makecell{Final ESS \\ AR-ABC Simulated / Accepted} \\

\midrule
\multirow{6}{*}{Exp. 1 (Obs.-based)} 
& LF-IBIS (ESS) & $\mu$ 
& 50{,}000 & 50 & 0.99 & 0.02165 & 94 / $\sim$23 min & 19576\\

& LF-IBIS (UP) & $\mu$ 
& 50{,}000 & 50 & 0.98 & 0.02165 & 89 / $\sim$29 min & 18701\\

& AR-ABC & $\mu$ 
& -- & -- & -- & 0.02165 & $\sim$13 sec & 3M / 4{,}936 \\

\cmidrule{2-9}
& LF-IBIS (ESS) & $\beta$ 
& 50{,}000 & 50 & 0.99 & 0.02165 & 125 / $\sim$35 min & 5732\\

& LF-IBIS (UP) & $\beta$ 
& 50{,}000 & 50 & 0.98 & 0.02165 & 108 / $\sim$36 min & 5758\\

& AR-ABC & $\beta$ 
& -- & -- & -- & 0.02165 & $\sim$18 sec & 3M / 1{,}355 \\

\midrule
\multirow{6}{*}{Exp. 2 (Utility-based)} 
& LF-IBIS (ESS) & $\mu$ 
& 100{,}000 & 50 & 0.92 & $2.1\times10^{-5}$ & 103 / $\sim$16 min & 260 \\

& LF-IBIS (UP) & $\mu$ 
& 100{,}000 & 50 & 0.95 & $2.1\times10^{-5}$ & 110 / $\sim$27 min & 449\\

& AR-ABC & $\mu$ 
& -- & -- & -- & $2.1\times10^{-5}$ & $\sim$4 min & 150M / 725 \\

\cmidrule{2-9}
& LF-IBIS (ESS) & $\beta$ 
& 100{,}000 & 50 & 0.93 & $2.1\times10^{-5}$ & 109 / $\sim$18 min & 381 \\

& LF-IBIS (UP) & $\beta$ 
& 100{,}000 & 50 & 0.95 & $2.1\times10^{-5}$ & 116 / $\sim$29 min & 497\\

& AR-ABC & $\beta$ 
& -- & -- & -- & $2.1\times10^{-5}$ & $\sim$6 min & 150M / 693 \\

\bottomrule
\end{tabular}
}
\end{table}

As a final assessment, to quantify the discrepancy between the approximate posterior distributions on $\mu$ produced by AR-ABC and by the two variants of LF-IBIS (ESS- and UP-based) and the exact Bayesian posterior, we use the \emph{Energy Distance}~\cite{SzekelyRizzo2004} for both experiments. The energy distance is a metric that measures the distance between probability distributions based on pairwise distances between samples.
For each method, we report the mean and standard deviation of the Energy distance computed over ten independent runs of the algorithm, all performed using the same observed history as in the experiments above. 
The Energy distance is reported only for the non-reparameterized versions of the experiments, since under the logistic reparameterization the exact Bayesian posterior is not available. Overall, the UP-based variant tends to produce posterior distributions that are closer to the exact Bayesian posterior. Moreover, the approximations obtained with UP exhibit lower variability across independent runs, and this difference is particularly pronounced in Experiment~2.

In Appendix~\ref{sec:supplementary2}, we also report the KL divergences between the posteriors of $\mu$ and $\pi$.


\begin{table}[H]
\centering
\footnotesize
\caption{Mean (standard deviation) of the Energy distance computed over ten independent runs of the algorithms with respect to the exact Bayesian posterior. Results are reported for inference on the parameter $\mu$ and the policy $\pi$.}
\medskip
\label{tab:energy_comparison}
\begin{tabular}{lcccc}
\toprule
\textbf{Method} 
& \multicolumn{2}{c}{\textbf{Exp. 1 (Obs.-based)}} 
& \multicolumn{2}{c}{\textbf{Exp. 2 (Utility-based)}} \\
\cmidrule(lr){2-3} \cmidrule(lr){4-5}
& $\mu$ & $\pi$ & $\mu$ & $\pi$ \\
\midrule
LF-IBIS (ESS) 
& 0.0033 (0.0005) & 0.0342 (0.0040)  & 0.0178 (0.0172) & 0.0676 (0.0402) \\
LF-IBIS (UP) 
& 0.0026 (0.0003) & 0.0265 (0.0029) & 0.0031 (0.0025) & 0.0273 (0.0131) \\
AR-ABC 
& 0.0001 (0) & 0.0040 (0.0011) & 0.1335 (0.0076) & 0.2199 (0.0064) \\
\bottomrule
\end{tabular}
\end{table}

\subsection{Updating policy from posterior distribution}
\label{sec:updating_policy}

The posterior distribution over policies provides valuable information for deciding when it is advantageous to update the current policy. In this section, policy updates are performed according to the criterion introduced in Section~\ref{subsec:posterior_policy_update}.

At each iteration, samples from the posterior distribution of optimal policies are used to evaluate the Bayesian Mean Squared Error (BMSE) for both the current policy and a candidate policy drawn from the posterior. The policy with the lower BMSE is selected, as it is, on average, closer to the set of plausible optimal policies.

The Bayesian Mean Squared Error for a policy $\pi$ is defined in Eq \eqref{eq:BMSE}

Intuitively, when the posterior distribution assigns high probability mass to the current policy, exploration of the environment is encouraged; otherwise, the policy is updated to exploit the information contained in the posterior.

In Figure~\ref{fig:v_fun_for_various_policies}, we illustrate the value function as a function of the policy parameter $\pi$ for Experiment~1, described in Section~\ref{sec:experiment1}, when LF-IBIS is run using the ESS-based criterion for decreasing the tolerance threshold $\epsilon$. The blue curve represents the posterior mean of the value function, while the shaded region corresponds to the $90\%$ posterior credibility bands. The dashed red vertical line indicates the true posterior optimal policy, computed via policy iteration using the true environment parameters $\mu_0^{\text{true}}$ and $\mu_1^{\text{true}}$. The posterior distribution of the value function is concentrated around a mean function that assigns higher values to policies in the neighborhood of the true optimal policy. This highlights a novel aspect of our approach, namely the ability to obtain a posterior distribution over the value function by propagating uncertainty in the parameters governing the environment dynamics.

\begin{figure}[H]
    \begin{center}
    \begin{tabular}{c}
    \includegraphics[scale=0.35]{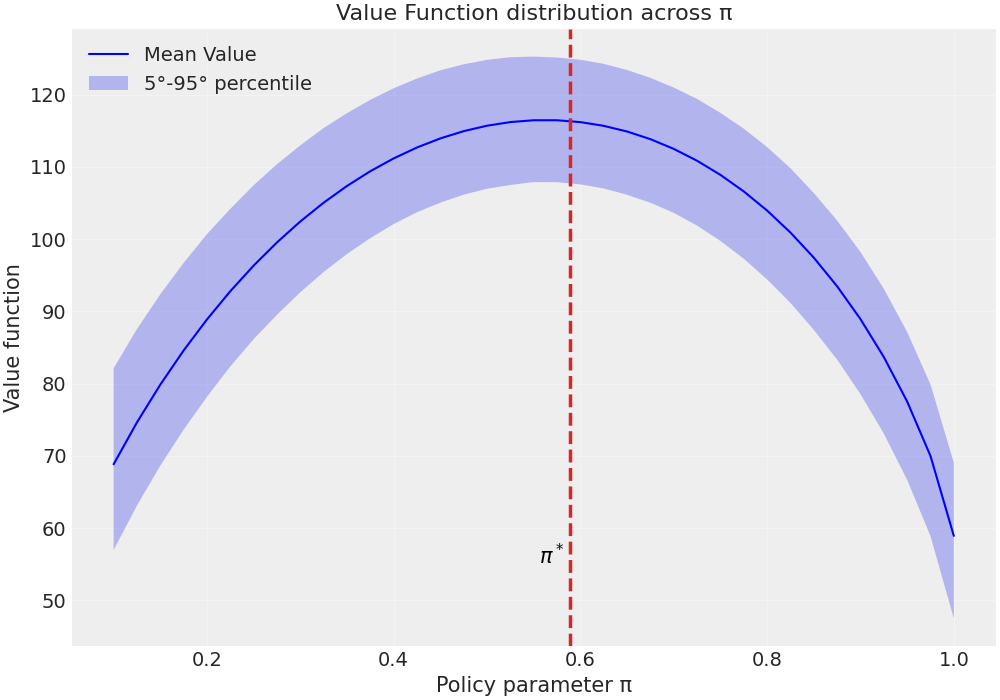}
    \end{tabular}
    \caption[]{\footnotesize Value function when the policy $\pi$ varies from 0 to 1. Blue curve: posterior mean of the value function with $90\%$ posterior credibility bands.
    }
    \label{fig:v_fun_for_various_policies}
    \end{center}
\end{figure}

In Figure~\ref{fig:policies_switch_trial_2}, we compare the value function obtained when the policy is kept fixed throughout the experiment with that obtained after switching to the updated policy. The results refer to Experiment~1 (Section~\ref{sec:experiment1}) ESS version. We also report the corresponding number of patients assigned to the treatment group, with and without policy switching. Credible bands are computed using posterior samples of the environment parameters~$\mu$.

The figure shows that, from a practical perspective, switching the policy is relevant. Whenever an improvement in the value function is observed, a corresponding increase in the number of patients assigned to the treatment group is also observed. The variability around the value function reflects the posterior uncertainty of the environment parameters~$\mu$: when the history length increases, this variability progressively decreases.

In this setting, it is possible to plot the value function because the state space consists of only two states, $\{0,1\}$, and—by construction—the value function does not differ between them. When the blue curve dominates the red one, a clear separation between the two value functions is observed, whereas the converse is not true. Moreover, the blue curve almost never falls below the red one, except for a negligible deviation at a single point.

\begin{figure}[H]
    \begin{center}
    \begin{tabular}{c}
    \includegraphics[scale=0.305]{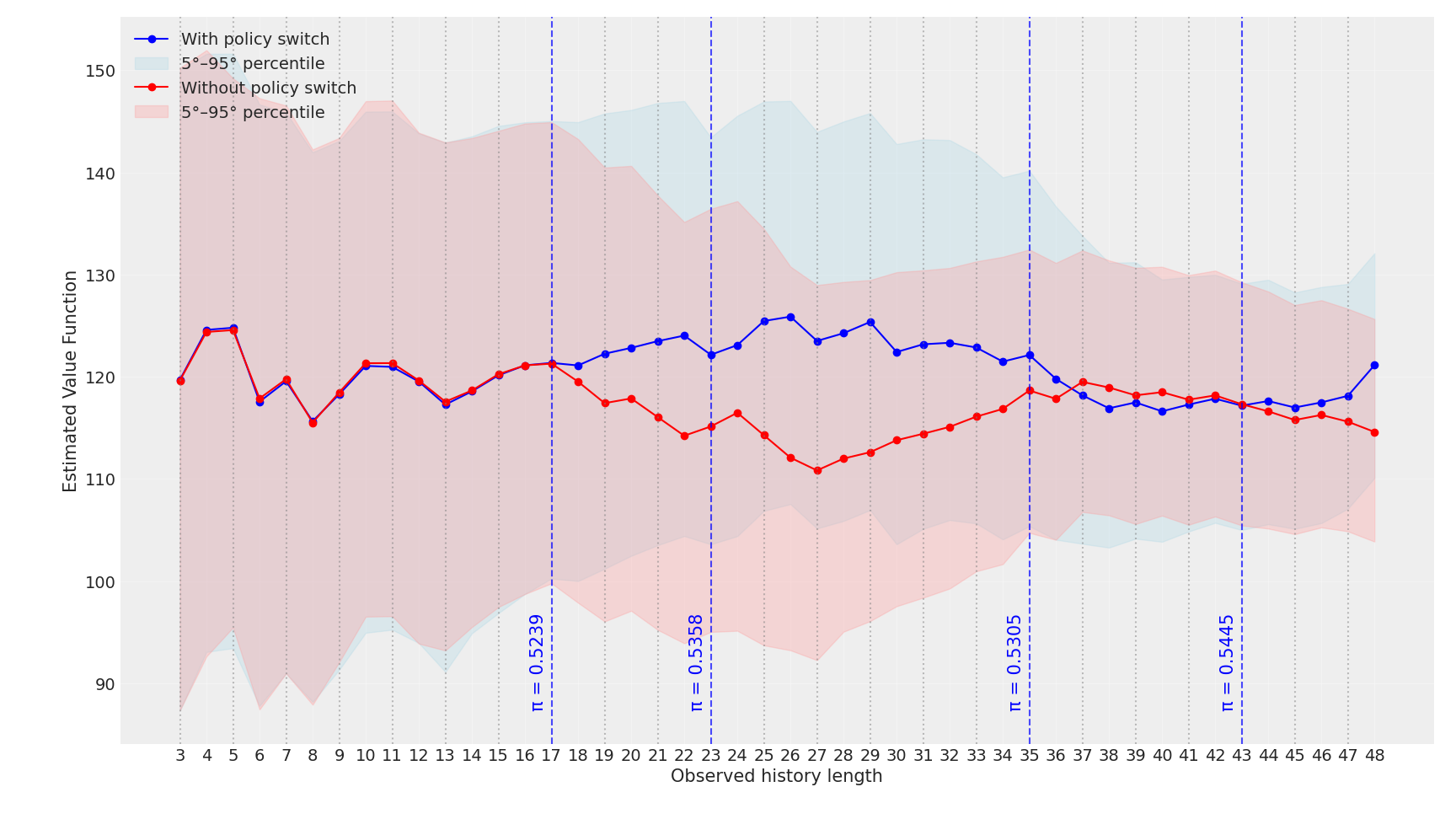}\\
    \includegraphics[scale=0.3]{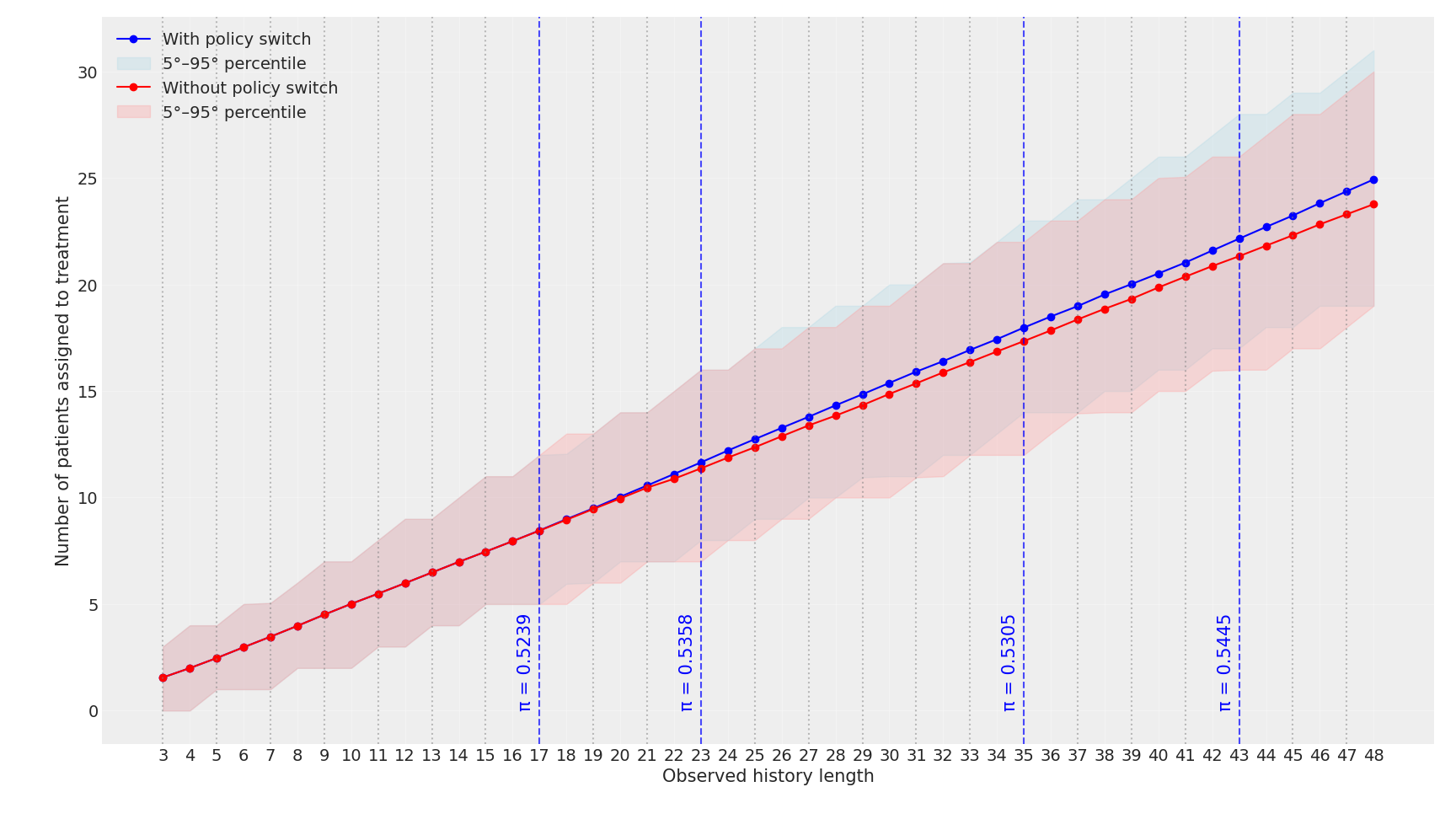}
    \end{tabular}
    \caption[]{\footnotesize  Value function (top panel) and number of patients assigned to treatment (bottom panel) with and without policy switch (see Figure~\ref{fig:trial_mu_1}, ESS version). Here posterior sampling is performed after every 2 new interactions with the environment. Vertical lines indicate decision points: grey lines represent the time steps (i.e., lengths of the observed history) at which a policy switch is evaluated, while blue lines denote the time steps at which a policy switch is actually performed.
    }
    \label{fig:policies_switch_trial_2}
    \end{center}
\end{figure}

\section{Discussion and Conclusion} \label{sec:conclusion}

In this work, we introduced a fully Bayesian Reinforcement Learning (fBRL) procedure based on \emph{Likelihood-Free Iterated Batch Importance Sampling} (LF-IBIS), designed to address data scarcity and to support online learning in RL. The proposed framework enables posterior inference over both environment parameters and policies without requiring access to an explicit likelihood.

We evaluated LF-IBIS in a controlled experimental setting where exact Bayesian inference is available, thereby allowing a direct assessment of the quality of the approximation. Despite the typical sources of approximation introduced by ABC—most notably the choice of the tolerance $\epsilon$, distance functions, and summary statistics \cite{Sisson}—our results show that the approximate posteriors produced by LF-IBIS closely match the exact posterior distributions. Moreover, we demonstrated that processing data incrementally yields posterior distributions that are comparable to those obtained when the entire observed history is processed at once, thus confirming the validity of the online formulation.

Beyond the empirical validation, we also established asymptotic properties of the proposed likelihood-free reweighting scheme. Specifically, we showed that the ABC approximation provides a consistent approximation of the predictive likelihood and that the resulting importance weights converge almost surely to those of the exact IBIS algorithm as the ABC tolerance tends to zero and the number of simulated histories increases. 

A further contribution of this work is the comparison between observation-based and utility-based summary statistics. Observation-based statistics, constructed from empirical outcome distributions, are more informative and lead to highly accurate posterior approximations. However, they rely on knowledge of sufficient statistics, which may not be available in realistic or complex environments. Utility-based statistics, while less informative due to their aggregated nature, provide a practical and flexible alternative that is computationally more efficient and does not require explicit modeling of the underlying process. Our experimental results indicate that the incremental, sequential nature of LF-IBIS partially compensates for the reduced informativeness of utility-based statistics, yielding shorter execution times while maintaining competitive posterior accuracy.

In addition, we investigated the ability of LF-IBIS to support adaptive decision-making through posterior-driven policy updates. When policies are sampled from their posterior distributions, the algorithm implements a Bayesian exploration--exploitation strategy: early posterior uncertainty promotes exploratory behavior, while posterior concentration gradually shifts the agent toward exploitation. This demonstrates that LF-IBIS not only performs inference over environment parameters, but also provides a principled mechanism for policy adaptation in online settings.

More generally, the proposed framework is not limited to MDP with analytically tractable dynamics. Whenever a simulator of the environment is available—even if the underlying process does not admit an explicit MDP formulation—the LF-IBIS methodology can be naturally extended to this model-free RL setting. 

Although in our experimental setting the likelihood can be written analytically—due to the assumption of a known probability of side effects in the reward function—the proposed approach does not fundamentally rely on this property. In more general scenarios, LF-IBIS can be applied in exactly the same way without requiring explicit specification of the transition dynamics or the side-effect generation mechanism, provided that a simulator is available. This highlights the fully likelihood-free nature of the method and its suitability for settings in which analytical modeling is infeasible or undesirable.

Overall, our results indicate that LF-IBIS constitutes a flexible and robust framework for likelihood-free Bayesian Reinforcement Learning, capable of supporting online inference, policy adaptation, and principled uncertainty quantification in complex decision-making problems.

\newpage

\bibliographystyle{plain}
\bibliography{references}
\newpage
\appendix

\section{Supplementary Material for Section~\ref{sec:lf_ibis}} \label{sec:supplementary}
\subsection{Technical Details for ABC--SMC with $M$ Simulations, Kernel Construction and Weight Simplification}
\label{appendix:abc_details}
In \cite{DelMoral}, the authors show the advantage of considering $M>1$ pseudo-datasets for each parameter proposal. In this case, the target distribution becomes
\begin{equation}
\label{eq:abc_target_M}
\tilde{p}_{\epsilon_i}(\mu, \boldsymbol{y}\mid x)
\propto 
p(\mu)
\left( \prod_{m=1}^M f(y_m \mid \mu) \right)
\left(
\frac{1}{M}\sum_{m=1}^M 
\kappa \big(d(y_m,x); \epsilon_i\big)
\right),
\end{equation}
where $\boldsymbol{y} = (y_1, \dots, y_M)$ denotes a vector of $M$ independent realizations of the random variable $Y$.

The corresponding approximated marginal posterior is
\begin{align*}
\tilde{p}_{\epsilon_i}(\mu\mid x)
&\propto 
\int_{\mathcal{X}^M}
p(\mu)
\left(\prod_{m=1}^M f(y_m\mid\mu)\right)
\left(
\frac{1}{M}\sum_{m=1}^M 
\kappa \big(d(y_m,x); \epsilon_i\big)
\right)
\, d\boldsymbol{y} \\
&=
p(\mu)
\int_{\mathcal{X}^M}
f(\boldsymbol{y}\mid\mu)
\left(
\frac{1}{M}\sum_{m=1}^M 
\kappa \big(d(y_m,x); \epsilon_i\big)
\right)
\, d\boldsymbol{y} \\
&=
p(\mu)\,
\mathbb{E}\!\left[
\kappa \big(d(Y,x); \epsilon_i\big)
\right] \\
&=
p(\mu)\,
\Pr\!\big(d(Y,x)\leq\epsilon_i\big).
\end{align*}

Hence, the likelihood function is still replaced by an estimate of 
$\Pr\!\big(d(Y,x)\leq\epsilon_i\big)$. 
This estimator is less variable than the crude single-simulation version, although it is computationally more expensive, since each evaluation requires $M$ pseudo-datasets.

\medskip

\noindent
\cite{DelMoral} also propose a specific choice for the forward and backward kernels.
\begin{itemize}
    
\item The forward kernel 
\(
K_i\big((\mu_{i-1},\boldsymbol{y}_{i-1}),
(\mu_i,\boldsymbol{y}_i)\big)
\)
is a Markov kernel with invariant distribution 
\(
\tilde{p}_{\epsilon_i}(\mu,\boldsymbol{y}\mid x)
\)
and proposal distribution
\[
q\big((\mu_{i-1},\boldsymbol{y}_{i-1}),
(\mu_i,\boldsymbol{y}_i)\big)
=
q(\mu_{i-1},\mu_i)\,
f(\boldsymbol{y}_i \mid \mu_i).
\]

It is defined as
\begin{align*}
&K_i\big((\mu_{i-1},\boldsymbol{y}_{i-1}),
(\mu_i,\boldsymbol{y}_i)\big) \\
&=
\begin{cases}
q(\mu_{i-1},\mu_i)
\displaystyle\prod_{m=1}^M f(y_{i,m}\mid\mu_i)\,
r\big((\mu_{i-1},\mu_i),
(\boldsymbol{y}_{i-1},\boldsymbol{y}_i)\big),
\\[1.2em]
\hfill
\text{if }
\tilde{p}_{\epsilon_i}(\mu_{i-1},\boldsymbol{y}_{i-1}\mid x)
>
\tilde{p}_{\epsilon_i}(\mu_i,\boldsymbol{y}_i\mid x),
\\[0.8em]
q(\mu_{i-1},\mu_i)
\displaystyle\prod_{m=1}^M f(y_{i,m}\mid\mu_i),
\quad \text{otherwise.}
\end{cases}
\end{align*}

where the acceptance rate is
\begin{align*}
&r\big((\mu_{i-1},\mu_i),
(\boldsymbol{y}_{i-1},\boldsymbol{y}_i)\big)
\\
&=
\frac{
p(\mu_i)\,
f(\boldsymbol{y}_i\mid\mu_i)\,
\Big(
\frac{1}{M}\sum_{m=1}^M
\kappa(d(y_{i,m},x);\epsilon_i)
\Big)
q(\mu_i,\mu_{i-1})\,
f(\boldsymbol{y}_{i-1}\mid\mu_{i-1})
}{
p(\mu_{i-1})\,
f(\boldsymbol{y}_{i-1}\mid\mu_{i-1})\,
\Big(
\frac{1}{M}\sum_{m=1}^M
\kappa(d(y_{i-1,m},x);\epsilon_i)
\Big)
q(\mu_{i-1},\mu_i)\,
f(\boldsymbol{y}_i\mid\mu_i)
}
\\[0.8em]
&=
\frac{
p(\mu_i)\,
\Big(
\frac{1}{M}\sum_{m=1}^M
\kappa(d(y_{i,m},x);\epsilon_i)
\Big)
q(\mu_i,\mu_{i-1})
}{
p(\mu_{i-1})\,
\Big(
\frac{1}{M}\sum_{m=1}^M
\kappa(d(y_{i-1,m},x);\epsilon_i)
\Big)
q(\mu_{i-1},\mu_i)
}.
\end{align*}

\item The backward kernel 
\(
L_{i-1}\big((\mu_i,\boldsymbol{y}_i),
(\mu_{i-1},\boldsymbol{y}_{i-1})\big)
\)
is defined as
\begin{align*}
&L_{i-1}\big((\mu_i,\boldsymbol{y}_i),
(\mu_{i-1},\boldsymbol{y}_{i-1})\big)
\\
&=
\frac{
\tilde{p}_{\epsilon_i}(\mu_{i-1},\boldsymbol{y}_{i-1}\mid x)
}{
\tilde{p}_{\epsilon_i}(\mu_i,\boldsymbol{y}_i\mid x)
}
\,
K_i\big((\mu_{i-1},\boldsymbol{y}_{i-1}),
(\mu_i,\boldsymbol{y}_i)\big)
\\[1em]
&=
\frac{
p(\mu_{i-1})
\displaystyle\prod_{m=1}^M f(y_{i-1,m}\mid\mu_{i-1})
\Big(
\frac{1}{M}\sum_{m=1}^M
\kappa(d(y_{i-1,m},x);\epsilon_i)
\Big)
}{
p(\mu_i)
\displaystyle\prod_{m=1}^M f(y_{i,m}\mid\mu_i)
\Big(
\frac{1}{M}\sum_{m=1}^M
\kappa(d(y_{i,m},x);\epsilon_i)
\Big)
}
\\
&\quad \cdot
\frac{
q_i(\mu_{i-1},\mu_i)
\displaystyle\prod_{m=1}^M f(y_{i,m}\mid\mu_i)
p(\mu_i)
\Big(
\frac{1}{M}\sum_{m=1}^M
\kappa(d(y_{i,m},x);\epsilon_i)
\Big)
q_i(\mu_i,\mu_{i-1})
}{
p(\mu_{i-1})
\Big(
\frac{1}{M}\sum_{m=1}^M
\kappa(d(y_{i-1,m},x);\epsilon_i)
\Big)
q_i(\mu_{i-1},\mu_i)
}
\\[1em]
&=
\left(
\prod_{m=1}^M
f(y_{i-1,m}\mid\mu_{i-1})
\right)
q_i(\mu_i,\mu_{i-1}).
\end{align*}

\end{itemize}

\medskip

The importance weights become
\begin{align}
\label{eq:weight_new1}
\omega_i^{(\ell)}
&=
\omega_{i-1}^{(\ell)}
\cdot
\frac{\mathcal{N}_i^{(\ell)}}{\mathcal{D}_i^{(\ell)}},
\end{align}
where
\begin{align*}
\mathcal{N}_i^{(\ell)}
&=
p(\mu_i^{(\ell)})\,
f(\boldsymbol{y}_i^{(\ell)}\mid\mu_i^{(\ell)})
\left(
\frac{1}{M}\sum_{m=1}^M
\kappa\big(d(y_{i,m}^{(\ell)},x);\epsilon_i\big)
\right)
\\
&\quad \times
L_{i-1}\big(
(\mu_i^{(\ell)},\boldsymbol{y}_i^{(\ell)}),
(\mu_{i-1}^{(\ell)},\boldsymbol{y}_{i-1}^{(\ell)})
\big),
\\[1em]
\mathcal{D}_i^{(\ell)}
&=
p(\mu_{i-1}^{(\ell)})\,
f(\boldsymbol{y}_{i-1}^{(\ell)}\mid\mu_{i-1}^{(\ell)})
\left(
\frac{1}{M}\sum_{m=1}^M
\kappa\big(d(y_{i-1,m}^{(\ell)},x);\epsilon_{i-1}\big)
\right)
\\
&\quad \times
K_i\big(
(\mu_{i-1}^{(\ell)},\boldsymbol{y}_{i-1}^{(\ell)}),
(\mu_i^{(\ell)},\boldsymbol{y}_i^{(\ell)})
\big).
\end{align*}

Since
\begin{align*}
\frac{
L_{i-1}\big((\mu_i^{(\ell)},\boldsymbol{y}_i^{(\ell)}),
(\mu_{i-1}^{(\ell)},\boldsymbol{y}_{i-1}^{(\ell)})\big)
}{
K_i\big((\mu_{i-1}^{(\ell)},\boldsymbol{y}_{i-1}^{(\ell)}),
(\mu_i^{(\ell)},\boldsymbol{y}_i^{(\ell)})\big)
}
&=
\frac{
\tilde{p}_{\epsilon_i}(\mu_{i-1}^{(\ell)},\boldsymbol{y}_{i-1}^{(\ell)}\mid x)
}{
\tilde{p}_{\epsilon_i}(\mu_i^{(\ell)},\boldsymbol{y}_i^{(\ell)}\mid x)
} \\
&=
\frac{
p(\mu_{i-1}^{(\ell)})
f(\boldsymbol{y}_{i-1}^{(\ell)}\mid\mu_{i-1}^{(\ell)})
\left(
\frac{1}{M}\sum_{m=1}^M
\kappa \big(d(y_{i-1,m}^{(\ell)},x); \epsilon_i\big)
\right)
}{
p(\mu_i^{(\ell)})
f(\boldsymbol{y}_i^{(\ell)}\mid\mu_i^{(\ell)})
\left(
\frac{1}{M}\sum_{m=1}^M
\kappa \big(d(y_{i,m}^{(\ell)},x); \epsilon_i\big)
\right)
},
\end{align*}
Eq.~\eqref{eq:weight_new1} simplifies to
\begin{equation}
\label{eq:weight_new2}
\omega_i^{(\ell)}
=
\omega_{i-1}^{(\ell)}
\cdot
\frac{
\sum_{m=1}^M 
\kappa \big(d(y_{i-1,m}^{(\ell)},x); \epsilon_i\big)
}{
\sum_{m=1}^M 
\kappa \big(d(y_{i-1,m}^{(\ell)},x); \epsilon_{i-1}\big)
}.
\end{equation}

\subsection{Proofs}
\label{prop:bias_proof}
\begin{proof}[Proof of Proposition~\ref{prop:bias}] 

\begin{align*}
    \beta(\mu;\epsilon, M) &:=\big\vert\tilde{f}_{\epsilon,M}(x\mid \mu) - f_{\epsilon,M}(x\mid \mu)\big\vert\\
    &= \frac{1}{M} \bigg\vert\sum \limits_{m=1}^M \kappa(d(y_m,x);\epsilon) - \sum\limits_{m=1}^M\mathbb{I}_{A_{\epsilon,x}}(y_m) \bigg\vert\\
    &=\frac{1}{M} \bigg\vert \sum \limits_{m=1}^M \mathbb{I}_{A_{\epsilon,x}}(y_m)+ \sum \limits_{m=1}^M\mathbb{I}_{A^c_{\epsilon,x}}(y_m) \exp\bigg[-\dfrac{d(x,y_m)}{\epsilon^2}\bigg]-\sum \limits_{m=1}^M\mathbb{I}_{A_{\epsilon,x}}(y_m)\bigg\vert\\
    &=\frac{1}{M}\sum \limits_{m=1}^M\mathbb{I}_{A^c_{\epsilon,x}}(y_m) \exp\bigg[-\dfrac{d(x,y_m)}{\epsilon^2}\bigg]\\
    &\leq \sum\limits_{m=1}^M \exp\bigg[-\dfrac{d(x,y_m)}{\epsilon^2}\bigg] 
\end{align*}
where  $A^c_{\epsilon, x} :=\{y \in \mathcal{X}^i : d(y, x) > \epsilon\}$.
\end{proof}

\subsection{LF-IBIS flowchart}
\label{appendix:LF-IBIS flowchart}
\begin{figure}[H]
    \centering
    \includegraphics[width=\textwidth]{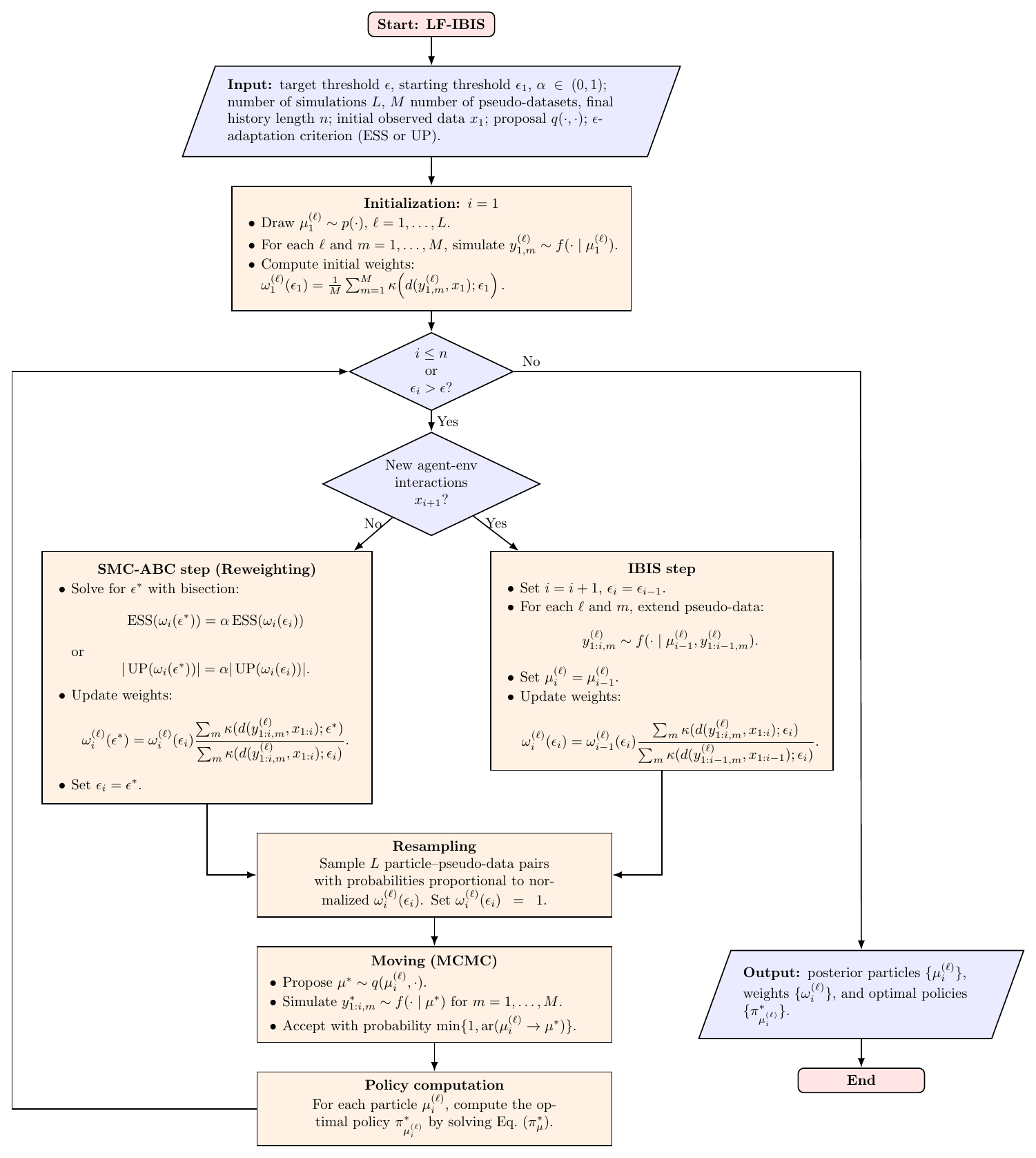}
   \caption{Overview of the proposed LF-IBIS algorithm. Starting from an initial approximation of the posterior distribution, the algorithm iteratively performs either an SMC-ABC step, in which the ABC tolerance is adaptively reduced, or an IBIS step, in which newly observed data are incorporated through likelihood-free importance reweighting. When necessary, particles are resampled and rejuvenated via an MCMC move. The resulting particles provide an approximation of the posterior distribution over model parameters and optimal policies.}
    \label{fig:lfibis-flowchart}
\end{figure}

\newpage
\section{Supplementary Material for Section~\ref{sec:experiments}}
\subsection{Beta proposal distribution} \label{sec:beta_proposal}
In the clinical trial experiment, at iteration $i$, each new particle is sampled according to the Beta proposal distribution $q(\mu_{0,i-1}^{(\ell)}, \cdot)$ defined as follow:
    \[
        \mu_{0,i}^{(\ell)} \sim q(\mu_{0,i-1}^{(\ell)}, \cdot) = \text{Beta}\big(a_{i}^{(\ell)}, b_{i}^{(\ell)}\big),
    \]
    where $a_{i}^{(\ell)}$ and $b_{i}^{(\ell)}$ are computed via the method of moments as
    \begin{align*}
        a_{i}^{(\ell)} &= \mu_{0,i-1}^{(\ell)} \left( \frac{\mu_{0,i-1}^{(\ell)} (1-\mu_{0,i-1}^{(\ell)})}{2 \widehat{\text{Var}}_{i-1}(\mu_0)} - 1 \right), \\
        b_{i}^{(\ell)} &= \big(1-\mu_{0,i-1}^{(\ell)}\big) \left( \frac{\mu_{0,i-1}^{(\ell)} (1-\mu_{0,i-1}^{(\ell)})}{2 \widehat{\text{Var}}_{i-1}(\mu_0)} - 1 \right).
    \end{align*}

    By construction, this proposal distribution satisfies
    \begin{align*}
        \mathbb{E}[\mu_{0,i}^{(\ell)}] &= \frac{a_{i}^{(\ell)}}{a_{i}^{(\ell)} + b_{i}^{(\ell)}} = \mu_{0,i-1}^{(\ell)}, \\        
        \text{Var}[\mu_{0,i}^{(\ell)}] &= \frac{a_{i}^{(\ell)} b_{i}^{(\ell)}}{(a_{i}^{(\ell)} + b_{i}^{(\ell)})^2 (a_{i}^{(\ell)} + b_{i}^{(\ell)} + 1)} = 2 \widehat{\text{Var}}_{i-1}(\mu_0) .
    \end{align*}
        
    Where the empirical variance of the particles at iteration $i-1$ is defined as
    \[
        \widehat{\text{Var}}_{i-1}(\mu_0) = \frac{1}{L-1}\sum_{\ell=1}^L \big(\mu_{0,i-1}^{(\ell)} - \overline{\mu}_{0,i-1}\big)^2,
    \]
    where $\overline{\mu}_{0,i-1} = \frac{1}{L}\sum_{\ell=1}^L \mu_{0,i-1}^{(\ell)}$.

    Let
    \[
    V_{i-1} \coloneqq 2\,\widehat{\operatorname{Var}}_{i-1}(\mu_0).
    \]
    When the empirical variance is too large, i.e., $V_{i-1} \ge \mu_{0,i-1}^{(\ell)}(1-\mu_{0,i-1}^{(\ell)})$, no unimodal Beta distribution exists with the desired moments. In this case, the proposal parameters are set such that $a_i^{(\ell)} + b_i^{(\ell)} = 1$, yielding a non-unimodal Beta distribution with substantial mass near the boundaries of the unit interval. 
    To avoid this behavior, we enforce the following condition for the method-of-moments parameters: for each particle \(\ell\),
    \[
    a_{i}^{(\ell)} =
    \begin{cases}
    \mu_{0,i-1}^{(\ell)}, & \text{if } V_{i-1} \ge \mu_{0,i-1}^{(\ell)}\big(1-\mu_{0,i-1}^{(\ell)}\big),\\[6pt]
    \mu_{0,i-1}^{(\ell)}\!\left(\dfrac{\mu_{0,i-1}^{(\ell)} \big(1-\mu_{0,i-1}^{(\ell)}\big)}{V_{i-1}} - 1\right), & \text{otherwise},
    \end{cases}
    \]
    \[
    b_{i}^{(\ell)} =
    \begin{cases}
    1-\mu_{0,i-1}^{(\ell)}, & \text{if } V_{i-1} \ge \mu_{0,i-1}^{(\ell)}\big(1-\mu_{0,i-1}^{(\ell)}\big),\\[6pt]
    (1-\mu_{0,i-1}^{(\ell)})\!\left(\dfrac{\mu_{0,i-1}^{(\ell)} \big(1-\mu_{0,i-1}^{(\ell)}\big)}{V_{i-1}} - 1\right), & \text{otherwise}.
    \end{cases}
    \]

In the logistic reparameterized setting of the clinical application, we do not face the constraint of keeping the variance small in order to avoid negative parameters, as previously required for the Beta proposal on $\mu_0$ and $\mu_1$. Since $\beta_0, \beta_1 \in \mathbb{R}$, the bivariate Normal proposal is always well defined, and scaling the covariance matrix does not introduce any feasibility issues.

\subsection{Additional Results: KL Divergence Analysis}\label{sec:supplementary2}

We compare the posterior distributions obtained in the experiments under the different inference strategies by computing the Kullback--Leibler (KL) divergence between the exact Bayesian posterior and its approximations produced by AR-ABC and by the two variants of LF-IBIS (ESS- and UP-based), for both experiments.
For each method, we report the mean and standard deviation of the KL divergence computed over ten independent runs of the algorithm, all performed using the same observed history as in the experiments above.
The KL divergence is reported only for the non-reparameterized versions of the experiments, since under the logistic reparameterization the exact Bayesian posterior is not available.
All KL divergences are estimated from posterior samples using kernel density approximations.

\begin{table}[H]
\centering
\footnotesize
\caption{Mean and standard deviation of the Kullback--Leibler divergence computed over ten independent runs of the algorithms with respect to the exact Bayesian posterior. Results are reported for inference on the parameters $\mu$ and on the policy $\pi$.}
\medskip
\label{tab:kl_comparison}
\begin{tabular}{lcccc}
\toprule
\textbf{Method} 
& \multicolumn{2}{c}{\textbf{Exp.~1 (Observation-based)}} 
& \multicolumn{2}{c}{\textbf{Exp.~2 (Utility-based)}} \\
\cmidrule(lr){2-3} \cmidrule(lr){4-5}
& $\mu$ & $\pi$ & $\mu$ & $\pi$ \\
\midrule
LF-IBIS (ESS) 
& 0.0954 (0.0132) & 0.0632 (0.0101) & 1.1589 (0.3567) & 0.3034 (0.2010) \\
LF-IBIS (UP) 
& 0.1239 (0.0211) & 0.0640 (0.0138) & 0.6992 (0.2246) & 0.1207 (0.0338) \\
AR-ABC 
& 0.0085 (0.0035) & 0.0052 (0.0027) & 1.1671 (0.0441) & 0.9696 (0.0431) \\
\bottomrule
\end{tabular}
\end{table}

\end{document}